\def\eqref#1{equation~\ref{#1}}
\def\1{\bm{1}}
\def\vg{{\bm{g}}}
\def\vw{{\bm{w}}}
\def\vx{{\bm{x}}}
\def\vy{{\bm{y}}}
\def\vz{{\bm{z}}}
\DeclareMathAlphabet{\mathsfit}{\encodingdefault}{\sfdefault}{m}{sl}
\SetMathAlphabet{\mathsfit}{bold}{\encodingdefault}{\sfdefault}{bx}{n}
\def\gF{{\mathcal{F}}}
\def\gL{{\mathcal{L}}}
\def\gO{{\mathcal{O}}}
\newcommand{\E}{\mathbb{E}}
\newcommand{\R}{\mathbb{R}}
\newcolumntype{P}[1]{>{\centering\arraybackslash}p{#1}}
\definecolor{clemson-orange}{RGB}{234,106,32}
\definecolor{highlight-orange}{RGB}{255,150,150}
\definecolor{chicago-maroon}{RGB}{128,0,0}
\definecolor{cincinnati-red}{RGB}{190,0,0}
\definecolor{soft-cyan}{RGB}{68,85,90}
\definecolor{firebrick}{RGB}{178,34,34}
\definecolor{crimson}{RGB}{220,20,60}
\definecolor{cerrulean}{rgb}{0.165,0.322,0.745}
\definecolor{jaam}{rgb}{0.45,0.0,0.45}
\declaretheoremstyle[
    headfont=\bfseries, 
    bodyfont=\normalfont\itshape, spaceabove=10pt,
    spacebelow=10pt]{mystyle}
\theoremstyle{mystyle}
\newtheorem{theorem}{Theorem}[section]
\newtheorem{theorem*}{Theorem}
\newtheorem{lemma}[theorem]{Lemma}
\newtheorem{definition}{Definition}
\newtheorem*{remark}{Remark}
\newtheorem{assumption}{Assumption}
\newif\ifsolutions \solutionstrue
\def\final{0}
\newcommand{\reviewer}[3]{
  \expandafter\newcommand\csname #1\endcsname[1]{
    \ifthenelse{\equal{\final}{1}} {
      \textcolor{#3}{}
    } {
      \textcolor{#3}{\begin{center} \textbf{#2} ##1 \end{center}}
    }
  }
}
\newcommand{\relu}{\mathop{\mathrm{ReLU}}}
\newcommand{\delGClip}{$\delta-${Regularized-GClip}\xspace}
\newcommand{\bb}{\mathbb}
\newcommand{\Z}{{\bb Z}}
\def\1{\bm{1}}
\def\vg{{\bm{g}}}
\def\vw{{\bm{w}}}
\def\vx{{\bm{x}}}
\def\vy{{\bm{y}}}
\def\vz{{\bm{z}}}
\DeclareMathAlphabet{\mathsfit}{\encodingdefault}{\sfdefault}{m}{sl}
\SetMathAlphabet{\mathsfit}{bold}{\encodingdefault}{\sfdefault}{bx}{n}
\def\gF{{\mathcal{F}}}
\def\gL{{\mathcal{L}}}
\def\gO{{\mathcal{O}}}
\title{Regularized Gradient Clipping Provably Trains \\ Wide and Deep Neural Networks}
\author{%
    \name Matteo Tucat\thanks{Work done while a student at the Department of Computer Science, University of Manchester.} 
    \email matteotucat@gmail.com\\
    \addr Department of Computer Science\\
    The University of Manchester
    \AND
    \name Anirbit Mukherjee\thanks{Corresponding author.} 
    \email anirbit.mukherjee@manchester.ac.uk\\
    \addr Department of Computer Science\\
    The University of Manchester
    \AND
    \name Procheta Sen
    \email sen.procheta@liverpool.ac.uk\\
    \addr Department of Computer Science\\
    University of Liverpool
    \AND
    \name Mingfei Sun
    \email mingfei.sun@manchester.ac.uk \\
    \addr Department of Computer Science\\
    The University of Manchester
    \AND
    \name Omar Rivasplata 
    \email omar.rivasplata@manchester.ac.uk \\
    \addr Department of Computer Science\\
    The University of Manchester
}
\newcommand{\LL}[1]{\mathcal{L} (\bm{w}_{#1})}
\newtheoremstyle{nonitalic} 
    {\topsep}                    
    {\topsep}                    
    {\upshape}                   
    {}                           
    {\bfseries}                  
    {.}                          
    {.5em}                       
    {}  
\theoremstyle{nonitalic}
\newtheorem*{nonitremark}{Remark} 
\renewenvironment{remark}{\begin{nonitremark}}{\end{nonitremark}}
\begin{document}

\maketitle
\begin{abstract}    
    
We present and analyze a novel regularized form of the gradient clipping algorithm, proving that it converges to global minima of the loss surface of deep neural networks under the squared loss, provided that the layers are of sufficient width. 
The algorithm presented here, dubbed $\delta-$GClip, introduces a modification to gradient clipping that leads to a first-of-its-kind example of a 
step size scheduling for gradient descent that 
provably minimizes training losses of deep neural nets.
We also present empirical evidence that our theoretically founded $\delta-$GClip algorithm is competitive with the state-of-the-art deep learning heuristics on various neural architectures including modern transformer based architectures. 
The modification we do to standard gradient clipping is designed to leverage the PL* condition, a variant of the Polyak-Łojasiewicz inequality which was recently proven 
to be true for sufficiently wide neural networks at any depth within a neighbourhood of the initialization.
\end{abstract}


\newcommand{\transp}[1]{#1^{\intercal}}



\section{Introduction}
In various disciplines, from control theory to machine learning theory, there has been a long history of trying to understand the nature of convergence on non-convex objectives for first-order optimization algorithms, i.e. algorithms which only have access to (an estimate of) the gradient of the objective \citep{maryak2001global, fang1997annealing}. 
The new 
incarnation
of this question in optimization problems in high dimensions, which arise in modern machine learning applications (like with neural network training), motivate the need for finite-time analysis of such algorithms. 
However, a challenging aspect of these modern use cases is their reliance on fine tuning of some hyper-parameters, such as the step size, momentum, and batch size. 
In the wake of this, the
``adaptive gradient'' algorithms, such as Adam  \citep{kingma2014adam}  have become essentially indispensable for 
deep learning
\citep{SpallSun2019, melis2017state, bahar2017empirical}. 

The widespread popularity  in deep learning of adaptive gradient methods like Adam arguably stems from the fact that it seems easy to find task-specific and useful neural nets for which the default settings of these algorithms 
work well out of the box.
Adam-like methods use as their update direction a vector which is the image of a linear combination of some (or all) of the gradients seen until the current iterate, under a linear transformation---often called the ``diagonal pre-conditioner''---constructed out of the history of the gradients. It is generally believed that this ``pre-conditioning'' makes these algorithms much less sensitive to the selection of its hyper-parameters. An important precursor to Adam was the AdaGrad algorithm \citep{duchi2011adaptive}.
%
The far-reaching usefulness of adaptive gradient methods has motivated significant attempts at their theoretical justifications in the non-convex setting.

However, to the best of our knowledge, there has not been so far a theoretical guarantee for any adaptive gradient algorithm to converge to the global minima of deep neural network loss surfaces.

On the other hand, in recent times a number of motivations have come to light to consider training algorithms beyond these conventional adaptive gradient algorithms \citep{bernstein2018signsgd}. In works like~\citet{simsekli2019tail} and \citet{zhangAdaptiveMethodsAttention}, a number of reasons have been pointed out as to how gradient clipping based adaptivity is better suited for deep learning. In this kind of adaptivity, the primary interest is on mechanisms to prevent the algorithm from using arbitrarily large gradients. Gradient clipping has been successfully deployed in a wide range of problems, particularly in natural language processing tasks such as GPTs \citep{languageModelsFewShotLearnersGPT3} and LSTMs \citep{RegularisingOptimisingLSTMwithGClip}, and more recently in computer vision tasks \citep{HP-imageRecognitionClipping}. Clipping the gradient is also known to alleviate the problem of exploding gradients in recurrent neural networks \citep{UnderstandingExplodingGradient-gclip,pascanu2013difficulty}, as well as helping to provide privacy guarantees in differentially private machine learning~\citep{deepLearningDifferentialPrivacyClippedGrad,DP-imageClassification-GClip}. However, as for Adam, gradient clipping, too, has no known theoretical guarantee of training neural nets. 


Inspired by the above, in this work we 
introduce 
a form of gradient clipping which in experiments we demonstrate to be competitive with Adam, vanilla stochastic gradient descent (SGD) and standard gradient clipping, while also being guaranteed to train neural nets of arbitrary depth when training on the squared loss and when the layers are sufficiently wide. Our proof crucially leverages the novel ${\rm PL^{*}}$ condition that was proven for such nets in \citet{pl-star-overparametrised-deep-learning}. {\em Thus, we give a first-of-its-kind deep learning algorithm that is of practical benefit while being rigorously provable to be minimizing loss functions on  deep neural nets.}

\subsection*{Summary of Results}

In \citet{ZhangGclipTraining}, 
the following specific form of gradient clipping (which from here onwards we will refer to as ``standard gradient clipping'' or ``GClip'') is studied.

\begin{definition}[{\bf GClip}]
\label{def:GClip}
For any $\eta, \gamma > 0$, the standard {\rm  gradient clipping (GClip)} algorithm for a differentiable objective function $f$ is defined as
\begin{gather}
    \vx_{t+1} = \vx_t - h(\vx_t) \cdot \nabla f (\vx_t), \notag \\[0mm]
\text{with } \hspace{2.5mm}
    h(\vx_t) \coloneqq  \eta \cdot \min \left \{1 ,  \frac{\gamma}{\norm{\nabla f (\vx_t)}} \right \}. \notag
\end{gather}
\end{definition}

The term $\gamma$ acts as the gradient norm threshold. To the best of our knowledge, this algorithm has no known convergence guarantees for deep learning, motivating us to present a modification of GClip, which we refer to as \delGClip, or $\delta$-GClip for short.

\begin{definition}[{\bf \delGClip}]
\label{def:delta-GClip}
For $\delta \in (0, 1)$
and $\eta, \gamma > 0$,
the {\rm \delGClip ($\delta$-GClip)} algorithm for a differentiable objective function $f$ is defined as
\begin{gather}
    \vx_{t+1} = \vx_t - h(\vx_t) \cdot \nabla f (\vx_t), \notag \\[0mm]
\text{with } \hspace{2.5mm}
    h(\vx_t) \coloneqq  \eta \cdot \min \left \{1 , \max \left \{ \delta, \frac{\gamma}{\norm{\nabla f (\vx_t)}} \right \} \right \}. \notag
\end{gather}
\end{definition}

The critical $\max \{ \delta, ... \}$ term ensures $h(\vx_t) \ge \eta \delta$, thus preventing $h(\vx_t)$ from vanishing as $\norm{\nabla f(\vx)} \to \infty$. It is important to note that, due to this modification, the distance between any two iterates $\norm{\vx_{t+1} - \vx_t}$ is not bounded as the gradient norm grows. 
In the experiments given in Section~\ref{sec:experiments} we shall see that, in practice, $\delta$ has to be chosen very small; though its presence is critical for the  convergence guarantee that we shall establish in Theorem~\ref{mainthm:deltaGClip-neuralnetwork}, 
which is the main theoretical contribution for our algorithm $\delta$-GClip. 

The informal description of this result is as follows: 
{\em Given a deep neural network that is sufficiently wide (parametric in $\delta$), $\delta$-GClip will minimize the squared loss to find a zero-loss solution at an exponential convergence rate, for any training data.}
To the best of our knowledge, $\delta$-GClip is the first instance of an adaptive gradient algorithm or GD 
with a non-trivial learning rate schedule, that provably minimizes the empirical loss of neural nets at any depth. Additionally, our experiments show that $\delta$-GClip offers competitive performance when compared against state-of-the-art deep learning optimizers --- for architectures and losses far outside the ambit of the proof given.


Note that setting $\delta = 0$ in the definition of $\delta$-GClip (Definition~\ref{def:delta-GClip}) recovers standard GClip (Definition~\ref{def:GClip}). 
A stochastic version of $\delta$-GClip will also be considered in Section \ref{sec:main-result} (cf. Definition~\ref{def:stochdel}), which is relevant when under a certain noisy gradient setup, and we state a convergence result for it in Theorem~\ref{cor:stochdel}.

\subsection*{Organization} 
In Section~\ref{sec:main-result} we give the formal statements of our convergence theorems for our adaptive gradient method, $\delta$-{\rm GClip}.
In Section~\ref{sec:experiments} we demonstrate empirically that our algorithm can compete with various popular deep learning heuristics on benchmark tests with a VAE, ResNet-18, Vision Transformer and a BERT based model. 
In Section~\ref{sec:review} we review the current literature on provable deep learning optimization algorithms, focusing on adaptive gradient algorithms.
In Section~\ref{sec:main-theorem-proof} we give the full proof of our main theorem on provable minimization of the squared loss for deep nets by our $\delta$-{\rm GClip} when using full gradients. We conclude, in Section~\ref{sec:conc}, discussing possible future directions of research. Appendix~\ref{subsec:stochastic-proof} contains the proof of convergence of stochastic $\delta$-{\rm GClip}.

\paragraph{Notation} 
The Euclidean ball centered at $\vw_0 \in \R^m$ of radius $R$ is $B(\vw_0, R) \coloneqq \{ \vw \in \mathbb{R}^m : \norm{\vw_0 - \vw}_2 \le R \}$. Unless otherwise stated, $\norm{.}$ denotes the $\ell_2$-norm for vectors and the spectral norm for matrices.

\section{The Main Results}
\label{sec:main-result}

This section contains two subsections covering, respectively, theory and experimental evidence for $\delta$-GClip. 

\subsection{Theory for {\delGClip}}
\label{sec:theory}

Towards our main theory result, we recall definitions.

\begin{definition}[$\mu$-PL* Condition]
    A non-negative loss function $\mathcal{L}$ is said to satisfy $\mu$-PL* on a set $\mathcal{S} \subset \mathbb{R}^m$ if $\exists \mu > 0$ such that $\forall \vw \in \mathcal{S} : \norm{\nabla \mathcal{L} (\vw)}^2 \ge \mu \mathcal{L} (\vw)$.
\end{definition}


Next, the
$L$-hidden layer feed-forward neural network architectures are recalled, and their loss setups, which were within the ambit of considerations in \citet{pl-star-overparametrised-deep-learning}.

\begin{definition}\label{def:setup}
A neural network is given by
\begin{gather}
f(\vw; \vx) = \alpha^{(L+1)}, \hspace{15pt} \alpha^{(0)} = \vx, 
\hspace{15pt} \alpha^{(l)} = \sigma_l \left ( \frac{1}{\sqrt{m_{l-1}}} \cdot W^{(l)} \alpha^{(l-1)} \right ) 
\hspace{5pt} \text{ for } l \in [1, L+1], \notag
\end{gather}
where $W^{(l)} \in \mathbb{R}^{m_l \cross m_{l-1}}$ is the matrix of connection weights for the $l$-th layer, $m_l$ is the width of the $l$th layer with $m_{L+1} =1$, and $\alpha^{(l)}$ is the output from the $l$-th layer, after the activation $\sigma_l$ is applied.
Also, the `weight vector' $\vw$ encompasses all weights across all layers.
We assume that the last layer activation $\sigma_{L+1}$ is $L_{\sigma}$-Lipschitz continuous and $\beta_{\sigma}$-Lipschitz smooth ($\beta_{\sigma}$-smooth), and satisfies $ \abs{\sigma_{L+1}' (\vz)} \ge \rho > 0 $.

We train the weights of $f(\vw,\cdot)$ using an $n$-sample training dataset, $\{ \vz_i = (\vx_i, y_i) \mid i = 1, ..., n\}$. We write $\mathcal{F} (\vw) = (f(\vw; \vx_1), ..., f(\vw; \vx_n)) \in \mathbb{R}^n$ for the vector of outputs for all training samples, and $\vy = (y_1,\ldots,y_n)$. 
We use the squared loss $\mathcal{L} (\vw) = \frac{1}{2} \norm{\mathcal{F} (\vw) - \vy}^2$. 
\end{definition}

Now we have all the requisite background to state the key theorem pertaining our algorithm $\delta$-GClip.

\begin{theorem}[{\bf \delGClip ~Provably Trains Wide and Deep Neural Nets}]
\label{mainthm:deltaGClip-neuralnetwork}
Suppose an overparametrized neural network $f$ is being trained using the square loss $\mathcal{L} (\vw)$, as specified in Definition~\ref{def:setup}. Then $\exists ~\lambda_0 >0$ s.t for any $\eta, \mu, \delta >0$ appropriately small enough, if the minimum width of the network layers satisfies
\begin{equation}
    m = \Tilde{\Omega} \left ( \frac{nR^{6L+2}}{(\lambda_0 - \mu \rho^{-2})^2}\right ) 
    \hspace{15pt}
    \text{ with } 
    \hspace{15pt}
    R = \frac{\eta \sqrt{2 \beta} \sqrt{\mathcal{L} (\vw_0)}}{1 - \sqrt{1 - \frac{1}{2} \cdot \eta \delta \mu}},
\end{equation}
then one can initialize the weights s.t, w.h.p over initialization, the above loss is $\mu$-$\rm{PL}^*$ in the ball B($\vw_0, R$) around initialization $\vw_0$. 
Furthermore, let $\beta_{\mathcal{F}}$ be s.t $\mathcal{F} (\vw)$ is locally $\beta_{\mathcal{F}}$-smooth in B($\vw_0, R$). 
Then, training such a network using {\delGClip}, with $\eta < \min \{ \frac{1}{\beta_{\mathcal{F}}}, \frac{1}{\mu}\}$ and $\delta \in (0,1)$, will result in geometric convergence to a global minimizer $\vw_* \in$ B($\vw_0, R$) such that $\mathcal{L} (\vw_*) = 0$. 
Moreover, \delGClip will converge, with convergence rate
\begin{equation}
\mathcal{L} (\vw_t) \le \mathcal{L} (\vw_0) \left (1- \frac{1}{2} \cdot  \eta \delta \mu \right )^t.
\end{equation}
\end{theorem}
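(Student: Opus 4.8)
The plan is to run a single self-consistent induction that simultaneously controls the decay of the loss and confines the entire trajectory to the ball $B(\vw_0, R)$ on which the ${\rm PL}^*$ inequality is available. The engine is the descent lemma for the locally $\beta$-smooth loss $\mathcal{L}$: writing the update as $\vw_{t+1} = \vw_t - h(\vw_t)\nabla\mathcal{L}(\vw_t)$ gives
\[
\mathcal{L}(\vw_{t+1}) \le \mathcal{L}(\vw_t) - h(\vw_t)\left(1 - \tfrac{\beta}{2}h(\vw_t)\right)\norm{\nabla\mathcal{L}(\vw_t)}^2.
\]
Two structural features of $\delta$-GClip are then exploited in opposite directions. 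The outer $\min\{1,\cdot\}$ forces $h(\vw_t)\le\eta$, so the step-size hypothesis (bounding $\eta$ by the reciprocal of the relevant local smoothness constant) makes the bracket at least $\tfrac12$; the inner $\max\{\delta,\cdot\}$ forces $h(\vw_t)\ge\eta\delta$. Feeding the lower bound and the ${\rm PL}^*$ inequality $\norm{\nabla\mathcal{L}(\vw_t)}^2\ge\mu\mathcal{L}(\vw_t)$ into the display yields the one-step contraction $\mathcal{L}(\vw_{t+1})\le(1-\tfrac12\eta\delta\mu)\mathcal{L}(\vw_t)$, which iterates to the stated geometric rate.

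The delicacy is that this contraction is valid only while every iterate lies in $B(\vw_0,R)$, so I would fold the loss bound and the containment claim into one induction on $t$. Assuming $\vw_0,\dots,\vw_t\in B(\vw_0,R)$, the argument above gives $\mathcal{L}(\vw_s)\le\mathcal{L}(\vw_0)(1-\tfrac12\eta\delta\mu)^s$ for all $s\le t$. To bound the total displacement I would invoke the standard consequence of $\beta$-smoothness for a nonnegative function, $\norm{\nabla\mathcal{L}(\vw_s)}\le\sqrt{2\beta\,\mathcal{L}(\vw_s)}$, and combine it with $h(\vw_s)\le\eta$ to estimate each step as $\norm{\vw_{s+1}-\vw_s}=h(\vw_s)\norm{\nabla\mathcal{L}(\vw_s)}\le\eta\sqrt{2\beta\,\mathcal{L}(\vw_0)}\,(1-\tfrac12\eta\delta\mu)^{s/2}$, i.e.\ the step lengths inherit geometric decay from the loss.

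Summing this geometric series and applying the triangle inequality gives
\[
\norm{\vw_{t+1}-\vw_0}\le\frac{\eta\sqrt{2\beta}\sqrt{\mathcal{L}(\vw_0)}}{1-\sqrt{1-\tfrac12\eta\delta\mu}}=R,
\]
which is exactly the radius in the statement, so $\vw_{t+1}\in B(\vw_0,R)$ and the induction closes. The main obstacle, and the reason the proof is nontrivial, is precisely this circularity together with the fact that $\delta$-GClip does \emph{not} have uniformly bounded steps: in the large-gradient regime its step is $\eta\delta\norm{\nabla\mathcal{L}}$, so unlike standard GClip one cannot confine the trajectory by a crude per-step bound. Containment is recovered only because smoothness converts geometric decay of the loss into geometric decay of the step lengths, and $R$ has been reverse-engineered so that the resulting series sums to exactly the ball radius. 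Two preliminaries feed the induction: establishing local $\beta$-smoothness of $\mathcal{L}$ from the architecture of Definition~\ref{def:setup} (so $\beta$ is compatible with $\beta_{\mathcal{F}}$ and the activation constants), and invoking \citet{pl-star-overparametrised-deep-learning} under the stated width bound to certify that $\mathcal{L}$ is $\mu$-${\rm PL}^*$ on $B(\vw_0,R)$ with high probability over initialization. Once the induction holds for all $t$, the loss decays geometrically to $0$ and the summability of $\norm{\vw_{s+1}-\vw_s}$ shows $(\vw_t)$ is Cauchy inside the closed ball, producing a limit $\vw_*\in B(\vw_0,R)$ with $\mathcal{L}(\vw_*)=0$.
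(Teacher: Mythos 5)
Your proposal is correct and follows essentially the same route as the paper: the same descent-lemma contraction using $\eta\delta \le h(\vw_t)\le \eta$ together with the ${\rm PL}^*$ inequality, the same simultaneous induction on loss decay and ball containment via $\norm{\nabla\mathcal{L}(\vw_s)}\le\sqrt{2\beta\,\mathcal{L}(\vw_s)}$ and the geometric series summing to exactly $R$, and the same appeal to \citet{pl-star-overparametrised-deep-learning} for the width-dependent ${\rm PL}^*$ certificate and the $m$-independent local smoothness constants. The paper merely packages the induction as a standalone lemma for general adaptive step sizes bounded in $[a,b]$ and then specializes to $a=\eta\delta$, $b=\eta$, but the substance is identical to what you describe.
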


\begin{remark} The assumptions of $\eta < 1/\mu$ and $\delta < 1$ imply $(1 - \frac{1}{2} \cdot  \eta\delta\mu) \in (\frac{1}{2}, 1)$, hence $\displaystyle \lim_{t \to \infty} \mathcal{L} (\vw_t) = 0$.
\end{remark}


This theorem fulfils the  theoretical guarantee for our $\delta$-GClip algorithm. The proof is deferred to Section~\ref{sec:main-theorem-proof}.

Next, we consider a stochastic version of our  $\delta$-GClip algorithm, defined as follows.

\begin{definition}[{\bf Stochastic \delGClip}]
\label{def:stochdel}
We define the {stochastic \delGClip} algorithm for a differentiable function $\gL$ as
\begin{gather}
    \vw_{t+1} = \vw_t - h(\vg_t) \cdot \vg_t, \notag \\[0mm]
\text{ where } \hspace{2.5mm}
    h(\vg_t) = \eta \min \left \{ 1, \max \left \{ \delta, \frac{\gamma}{\norm{\vg_t} } \right \}\right \} \notag \\ 
\text{ and } \hspace{2.5mm}
    \E [ \vg_t \mid \vw_t] = \nabla \gL(\vw_t)
\end{gather}
for any $\eta, \gamma > 0$, $\delta \in (0, 1)$, and an arbitrary choice of $\vw_1$, the initial point. 
\end{definition}

Towards analyzing the stochastic version of $\delta$-GClip just defined, we make the following assumptions.

\begin{assumption}
\label{assumption-stochastic-bound}
$\exists ~\theta \ge 0$ s.t. $\forall \vw, ~\norm{\vg(\vw) - \nabla \gL(\vw)} \le \theta$.
\end{assumption}

\begin{assumption}
\label{assumption-L-lower-bound}
$\mathcal{L}$ is non-negatively lower bounded i.e. $\min_\vw \gL{} = \gL_{*} \geq 0$.
\end{assumption}

\begin{assumption}
\label{assumption-beta-smooth}
$\mathcal{L}$ is $\beta$-smooth. 
\end{assumption}



The following theorem holds for stochastic $\delta$-GClip.

\begin{theorem}[{\bf Convergence of Stochastic \delGClip}]\label{cor:stochdel}
Given Assumptions \ref{assumption-stochastic-bound}, \ref{assumption-L-lower-bound} and \ref{assumption-beta-smooth}, and for an arbitrary choice of $\epsilon >0$, let $\epsilon' \coloneqq \epsilon / \theta$. 
Then, for $\beta =1$, $\delta = (1+2\epsilon'^2)/(1+3\epsilon'^2)$, and $\eta = (\frac{1}{4}\cdot \epsilon'^2)/(1+\epsilon'^2)$, stochastic {\delGClip} iterates satisfy the following inequality:
\[ 
\text{for } \ T = \frac{\theta^4}{\epsilon^4}, 
\qquad
\min_{t=1,\ldots,T}  \E \left [ \norm{\nabla \gL(\vw_t)}^2 \right ] \le \gO(\epsilon^2) .
\]
\end{theorem}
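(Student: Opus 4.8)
The plan is to run the standard descent-lemma argument for smooth nonconvex stochastic optimization, but adapted to the fact that the effective step size $h(\vg_t)$ is itself a bounded random function of the stochastic gradient, confined to $[\eta\delta,\eta]$; this coupling between the step size and the gradient noise is what makes the clipping analysis nonstandard. First I would invoke $\beta$-smoothness (Assumption~\ref{assumption-beta-smooth}) and the update rule $\vw_{t+1}=\vw_t-h(\vg_t)\vg_t$ to obtain the one-step bound $\gL(\vw_{t+1}) \le \gL(\vw_t) - h(\vg_t)\langle \nabla\gL(\vw_t),\vg_t\rangle + \tfrac{\beta}{2}h(\vg_t)^2\norm{\vg_t}^2$, then take the conditional expectation $\E[\,\cdot\mid\vw_t]$. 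The quadratic (second-order) term is easy: since $h(\vg_t)\le\eta$ and, by unbiasedness together with Assumption~\ref{assumption-stochastic-bound}, $\E[\norm{\vg_t}^2\mid\vw_t]=\norm{\nabla\gL(\vw_t)}^2+\E[\norm{\vg_t-\nabla\gL(\vw_t)}^2\mid\vw_t]\le\norm{\nabla\gL(\vw_t)}^2+\theta^2$, it is at most $\tfrac{\beta}{2}\eta^2(\norm{\nabla\gL(\vw_t)}^2+\theta^2)$.

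The crux is the first-order term $\E[h(\vg_t)\langle\nabla\gL(\vw_t),\vg_t\rangle\mid\vw_t]$, which I would split into a descent part and a clipping-induced bias. For the descent part, $h(\vg_t)\ge\eta\delta$ gives $\E[h(\vg_t)\norm{\nabla\gL(\vw_t)}^2\mid\vw_t]\ge\eta\delta\norm{\nabla\gL(\vw_t)}^2$. For the cross term $\E[h(\vg_t)\langle\nabla\gL(\vw_t),\vg_t-\nabla\gL(\vw_t)\rangle\mid\vw_t]$, the naive bound $|h(\vg_t)|\le\eta$ is too lossy and leaves a noise floor of order $\theta^2$. The key step is the decomposition $h(\vg_t)=\eta+(h(\vg_t)-\eta)$: the $\eta$-part vanishes because $\E[\vg_t-\nabla\gL(\vw_t)\mid\vw_t]=0$, while the remainder is controlled using $|h(\vg_t)-\eta|=\eta-h(\vg_t)\le\eta(1-\delta)$ together with $|\langle\nabla\gL(\vw_t),\vg_t-\nabla\gL(\vw_t)\rangle|\le\theta\norm{\nabla\gL(\vw_t)}$, yielding a cross-term magnitude of only $\eta(1-\delta)\theta\norm{\nabla\gL(\vw_t)}$. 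This is precisely why $\delta$ is taken close to $1$: the clipping bias is proportional to $(1-\delta)$. Collecting the terms produces $\E[\gL(\vw_{t+1})\mid\vw_t]\le\gL(\vw_t)-\eta\big(\delta-\tfrac{\beta}{2}\eta\big)\norm{\nabla\gL(\vw_t)}^2+\eta(1-\delta)\theta\norm{\nabla\gL(\vw_t)}+\tfrac{\beta}{2}\eta^2\theta^2$.

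Next I would apply Young's inequality to the linear term $\eta(1-\delta)\theta\norm{\nabla\gL(\vw_t)}$ to absorb half of the $\norm{\nabla\gL(\vw_t)}^2$ coefficient, leaving a recursion $\E[\gL(\vw_{t+1})\mid\vw_t]\le\gL(\vw_t)-\tfrac12\eta\big(\delta-\tfrac{\beta}{2}\eta\big)\norm{\nabla\gL(\vw_t)}^2+C$, where $C=\tfrac{\eta(1-\delta)^2\theta^2}{2(\delta-\beta\eta/2)}+\tfrac{\beta}{2}\eta^2\theta^2$ is a constant noise term. Summing this telescoping inequality over $t=1,\dots,T$, taking total expectation, and using $\E[\gL(\vw_{T+1})]\ge\gL_*$ (Assumption~\ref{assumption-L-lower-bound}) bounds the minimum by the average, giving $\min_{t\le T}\E[\norm{\nabla\gL(\vw_t)}^2]\le\tfrac{2(\gL(\vw_1)-\gL_*)}{\eta(\delta-\beta\eta/2)T}+\tfrac{(1-\delta)^2\theta^2}{(\delta-\beta\eta/2)^2}+\tfrac{\beta\eta\theta^2}{\delta-\beta\eta/2}$.

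Finally I would substitute $\beta=1$ and the prescribed $\delta=(1+2\epsilon'^2)/(1+3\epsilon'^2)$, $\eta=(\tfrac14\epsilon'^2)/(1+\epsilon'^2)$, and $T=\theta^4/\epsilon^4=1/\epsilon'^4$, and check each term. Here $1-\delta=\epsilon'^2/(1+3\epsilon'^2)=\Theta(\epsilon'^2)$, $\eta<\tfrac14$, and $\delta-\tfrac12\eta$ lies in $(\tfrac{13}{24},1)$, hence is bounded away from $0$. Consequently the two noise terms scale as $(1-\delta)^2\theta^2=\gO(\epsilon'^4\theta^2)=\gO(\epsilon^4/\theta^2)$ and $\eta\theta^2=\gO(\epsilon'^2\theta^2)=\gO(\epsilon^2)$, while the optimization term scales as $1/(\eta T)=\gO(\epsilon'^2)=\gO(\epsilon^2/\theta^2)$; treating $\theta$ and $\gL(\vw_1)-\gL_*$ as fixed problem constants, all three are $\gO(\epsilon^2)$, which is the claim. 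I expect the cross-term bias control in the second paragraph---obtaining the crucial $(1-\delta)$ factor rather than a bare $\theta$---to be the main obstacle, since it is exactly what turns the clipping floor $\delta$ into an accuracy knob and prevents a non-vanishing $\Theta(\theta^2)$ error.
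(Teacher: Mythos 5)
Your proposal is correct and follows essentially the same route as the paper: the smoothness-based descent lemma, the decomposition $h(\vg_t)=\eta+(h(\vg_t)-\eta)$ with unbiasedness annihilating the $\eta$-part (this is exactly the paper's Lemma~\ref{lem:h}), Young's inequality to absorb the $\eta(1-\delta)\theta\norm{\nabla \gL(\vw_t)}$ bias into the quadratic term, and telescoping before substituting the prescribed $\delta,\eta,T$. The only differences are cosmetic: you bound the second-order term by $\eta^2\,\E[\norm{\vg_t}^2\mid\vw_t]$ before expanding, which lets you dispense with the paper's Lemma~\ref{lem:hsq}, and you use a parameterized Young's inequality where the paper uses $\theta\norm{\nabla \gL(\vw_t)}\le\tfrac12(\theta^2+\norm{\nabla \gL(\vw_t)}^2)$ (whence its $(3\delta-1)$ factor), which changes the constants but not the $\gO(\epsilon^2)$ conclusion.
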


The proof of this theorem is given in Appendix~\ref{subsec:stochastic-proof}, where we first prove a slightly more general result. 

 We note that albeit $\delta-$GClip only partially clips the gradient, the convergence guarantee above does not need the gradient norms to be bounded as was also the case for standard stochastic gradient clipping, cf. Theorem~$7$ in \citet{ZhangGclipTraining}. {\em Next,} we note that the convergence guarantee for standard stochastic clipping does not immediately hold as stated in \citet{ZhangGclipTraining} for the standard smoothness assumption that is used here. 
{\em Lastly,} unlike standard clipping, here we can get convergence guarantees in the deterministic (``full gradient'') setting (Theorem~\ref{mainthm:deltaGClip-neuralnetwork}) as well as the noisy setting (Theorem~\ref{cor:stochdel}), for the same clipping algorithm.

\section{Experimental Evidence for The Performance of \delGClip}
\label{sec:experiments}

We split our experimental demonstrations into two segments. In the next subsection we focus on certain conventional deep-learning models and in the later subsection we focus on transformer based models. The experiments cover text as well as image data. We note that all the models considered here are outside the scope of the theory proven previously and hence the success of $\delta$-GClip in such varied scenarios can be seen to robustly demonstrate its abilities as a deep-learnning algorithm. The code for all our experiments can be found in our GitHub repository\footnote{Our codes are available at 
\url{https://github.com/mingfeisun/delta-gclip}}


\subsection{Experiments with a ResNet and a VAE}
We demonstrate here that the regularization term in {\rm \delGClip} helps improve the performance of standard gradient clipping---which anyway outperforms stochastic gradient descent (SGD)---and is in fact competitive when compared against the most popular optimizers such as Adam, even surpassing it in some cases. We test in supervised classification as well as unsupervised distribution learning settings. 

We perform four sets of experiments. The first set is on the standard ResNet-18 \citep{deepResidualLearningResNet} being trained on the benchmark CIFAR-10 \citep{learningTinyImagesCIFAR} dataset, which we recall is a $10$-class image classification task with $50, 000$ training images and $10,000$ test images. The second set of experiments is training a VAE model on the Fashion-MNIST dataset, with $60,000$ training samples and $10,000$ for testing. Further, each of these is done both with learning rate scheduling---whereby $\eta$ (or the learning rate) is reduced at certain points in the training---and without (constant $\eta$ throughout). 

In the (supervised) classification experiments, the training is done using the cross-entropy loss and $\rm ReLU$ gate nets, and using weight-decay (of $5\textrm{e}{-}4$). Whereas the VAE setup does not have a loss function in the same conventional sense as considered in the theorems in Section~\ref{sec:main-result}. { Hence, these experiments demonstrate the efficacy of regularized gradient clipping ($\delta$-GClip) beyond the ambit of the current theory.}

  We ran all experiments of this segment using a standard desktop with a GeForce RTX 2060 graphics card. We built custom implementations of {\rm \delGClip} and standard GClip and used the standard Pytorch optimizers for SGD and Adam, which we recall is highly optimized. Hence, we would be demonstrating performance of our modification in competitions which are a priori skewed in favour of the existing benchmarks. 

In the legends of the figures, the notation ${\rm SGD}$ $(0.1)$ stands for stochastic gradient descent with learning rate $\eta = 0.1$, notation $\delta-${\rm GClip }$(1;1;1\textrm{e}{-}8)$ stands for \delGClip with $\eta=1, \gamma=1, \delta=1\textrm{e}{-}8$, and {\rm GClip }$(5;1)$ for standard gradient clipping with $\eta=5$, $\gamma=1$. {\rm Adam }$(1)$ is notation for Adam with $\eta=1$, and similarly for other hyperparameter choices. 

\paragraph{ResNet-18 on CIFAR-10.}


The ResNet-18 was trained using the full training set using mini-batches of size $512$. We tested all the following hyperparameter combinations: $\eta \in \{ 0.0001, 0.001, 0.01, 0.1, 1, 5 \}$, $\gamma \in \{ 0.25, 1, 5, 10\}$ and $\delta \in \{1\textrm{e}{-}3, 1\textrm{e}{-}8\}$ for each optimizer. For Adam, only the learning rate ($\eta$) was modified, the rest were left at the PyTorch defaults ($\beta_1 = 0.9$, $\beta_2 = 0.999$, $\varepsilon=1\textrm{e}{-}8$). In the case with scheduling the $\eta$ value quoted in the legend denotes the $\eta$ value at epoch $0$, i.e. before any reductions by the scheduling algorithm are done. 

\paragraph{Experiments Without Learning Rate Scheduling.}
In Figure~\ref{fig:resnosched} we only plot the best-performing (in terms of test accuracy) hyperparameter selection for each algorithm.

\begin{figure}[h]
\centering
\includegraphics[width=0.95\textwidth]
{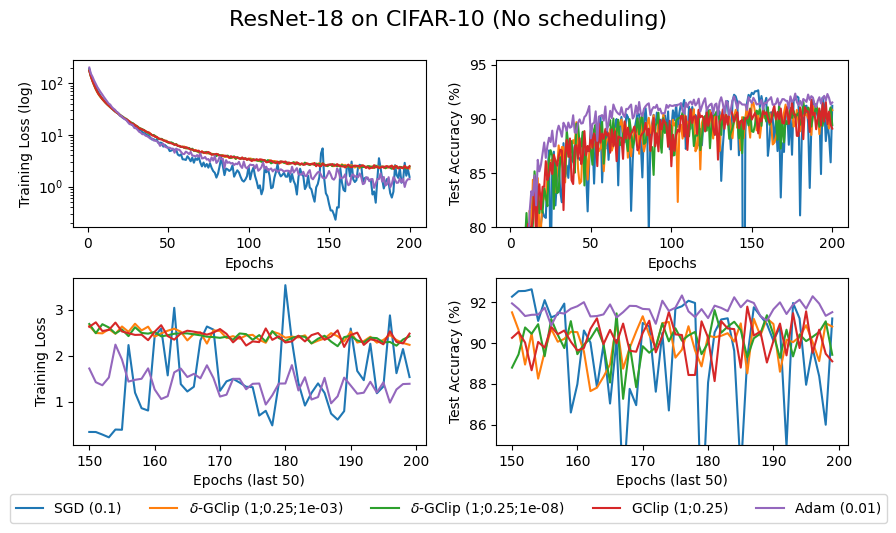}
\caption{\delGClip ($\delta$-GClip) is competitive against SOTA heuristics for training ResNet-18 on CIFAR-10 without learning-rate scheduling.}\label{fig:resnosched}
\end{figure}

\paragraph{Experiments With Learning Rate Scheduling.} 
In Figure~\ref{fig:ressched} we show a repeat of the above experiments and again plot the best-performing hyperparameters. In here, we start at larger $\eta$ values and divide $\eta$ by $10$ at epochs $100$ and $150$, following the setup from \cite{ZhangGclipTraining}. 

\begin{figure}[h!]
\centering
\includegraphics[width=0.95\textwidth]{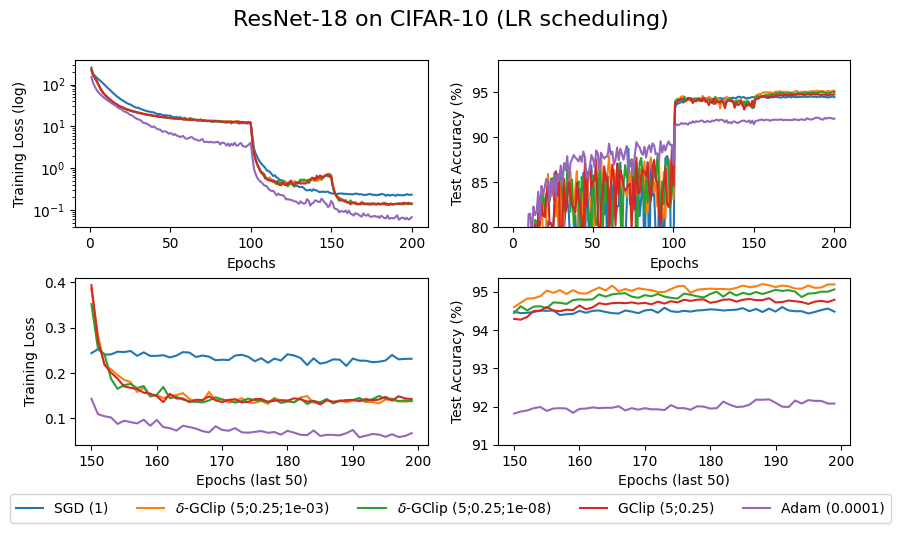}
\caption{\delGClip ($\delta-$GClip) outperforms other optimizers for training ResNet-18 on CIFAR-10 with learning-rate scheduling.}\label{fig:ressched} 
\end{figure}

For completeness, in Figure~\ref{fig:resnet18_cifar10_no_weight_decay} we present a version of the experiment above but without weight-decay for any of the algorithms considered.

\begin{figure}[h!]
\centering
\includegraphics[width=0.95\textwidth]{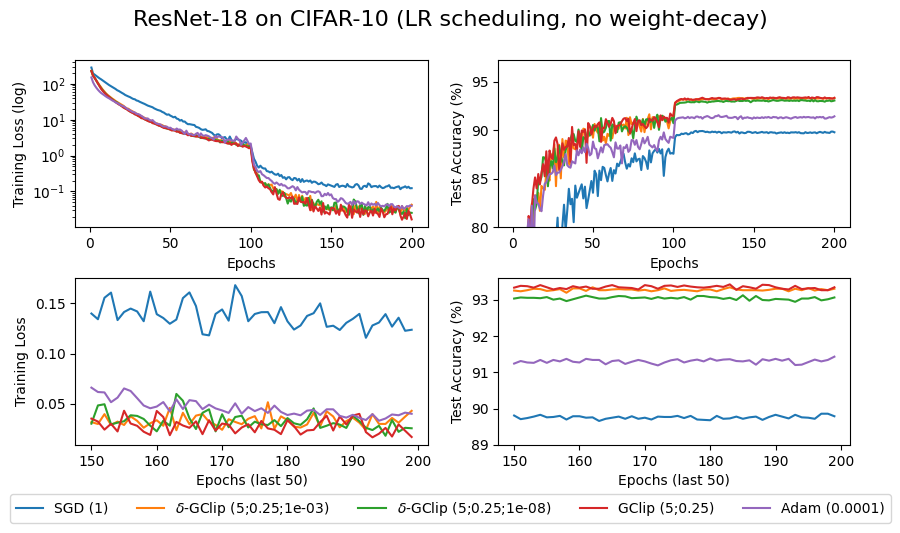}
\caption{\delGClip ($\delta$-GCLip) matches the best heuristics for training a ResNet-18 on CIFAR-10 with learning-rate scheduling, but no weight-decay.}
\label{fig:resnet18_cifar10_no_weight_decay} 
\end{figure}

We note that the performance of the gradient clipping based algorithms, as well as Adam, do not show significant changes with the removal of weight decay; however, SGD performs significantly worse. 


We draw two primary conclusions from these results.

{\em Firstly,} that a very small value of $\delta$ in {\rm \delGClip} does not seem to have a significant effect either for loss minimization or test accuracy. The results for {\rm \delGClip} and standard {\rm GClip}, set to similar $\eta$ and $\gamma$ values, are practically identical in either scenarios for all small enough values of $\delta$ tried. As alluded to in the previous sections, the gradient norm would have to be larger than $\eta\gamma / \delta$ for the lower bound on $h(\vw_t)$ to be attained, and even for the larger setting of $\delta = 1\textrm{e}{-}3$ and a typical $\gamma =0.25$ setting requires a gradient norm of over $250$, which is only infrequently seen along the optimization trajectory. 

{\em Secondly,} though Adam attained the best test accuracy without learning rate scheduling by a margin of about $\sim 1$ percentage point compared to both {\rm \delGClip} and standard gradient clipping, but all other optimizers surpassed it by $\sim 3$ percentage points when learning rate scheduling was used. {\em The best performance with scheduling (which is by our regularized gradient clipping) is better than for any algorithm (ours or not) without scheduling.} Interestingly, with learning rate scheduling Adam performed the best in terms of minimizing the training loss while SGD performed the worst, even though SGD's solution seems to generalize significantly better (as shown by the $\sim 3$ percentage point higher test accuracy).

The significant ability of $\delta$-regularized gradient clipping to exploit learning rate scheduling motivates an interesting direction for future exploration in theory.  

\paragraph{VAE on Fashion-MNIST.}

We performed the VAE training experiment both with and without  scheduling when training on the Fashion-MNIST dataset. We tested the following hyperparmeter grid choices: $\eta \in \{ 1\textrm{e}{-}5, 1\textrm{e}{-}4, 1\textrm{e}{-}3, 1\textrm{e}{-}2\}$, $\gamma \in \{ 10, 50, 200, 500 \}$, $\delta \in \{0.01, 0.1, 1\}$. 
We utilize the same scheduling as in the ResNet experiment ($\eta$ division by $10$ at epochs $100$ and $150$), and the results are given in Figure~\ref{fig:vae-ressched}.

\begin{figure}[h!]
\centering
\includegraphics[width=0.95\textwidth]{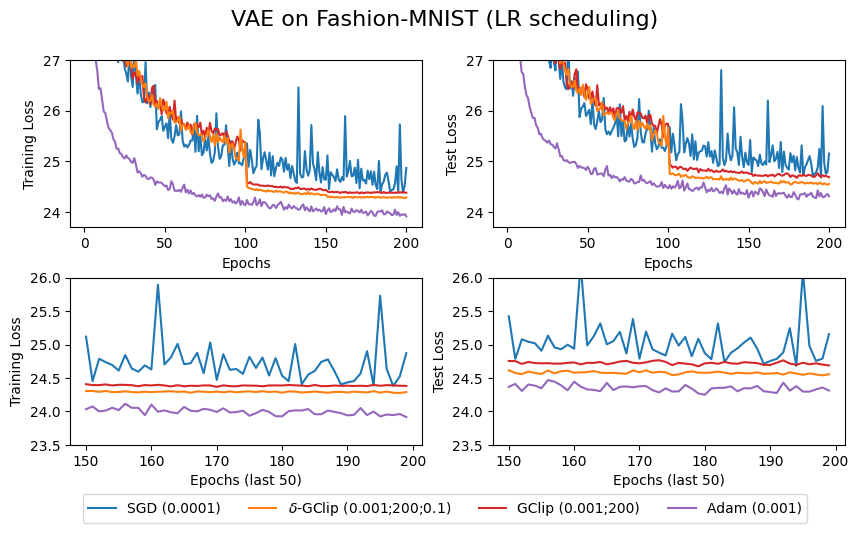}
\caption{\delGClip ($\delta$-GClip) is competitive against SOTA heuristics for training a VAE on the Fashion-MNIST dataset with learning-rate scheduling.} \label{fig:vae-ressched} 
\end{figure}

The VAE results with (and without, though not shown here) learning rate scheduling supports our earlier observations that the added regularization term of $\delta$ helps the performance w.r.t that of GClip at the same values of step-length and clipping threshold, which anyway outperforms SGD. And it is only mildly underperforming with respect to Adam.

We therefore conclude from our experiments that {\rm \delGClip} clipping remains competitive with current optimizers, while offering the significant benefit of provable deep neural network training.


\subsection{Evidence for The Performance of \delGClip on Transformers}

In this section we focus on benchmark state-of-the-art transformer-based architectures, that are not known to satisfy the $\mu$-PL* condition --- as in the examples of the previous section. Yet, we show that our $\delta$-GClip proposal continues to be competitive against the widely used heuristic of Adam. 

In Figure~\ref{fig:vit} we show that starting from random initialization Adam's ability to train a Vision Transformer \citep{dosovitskiy2020image} for the classification task on the CIFAR-10 dataset is matched in train and test loss and accuracy by our theoretically grounded proposal of $\delta$-GClip at $\eta = 0.2, \gamma = 1$ and $\delta = 10^{-6}$.

\begin{figure}[h!]
\centering
\includegraphics[width=0.95\textwidth]{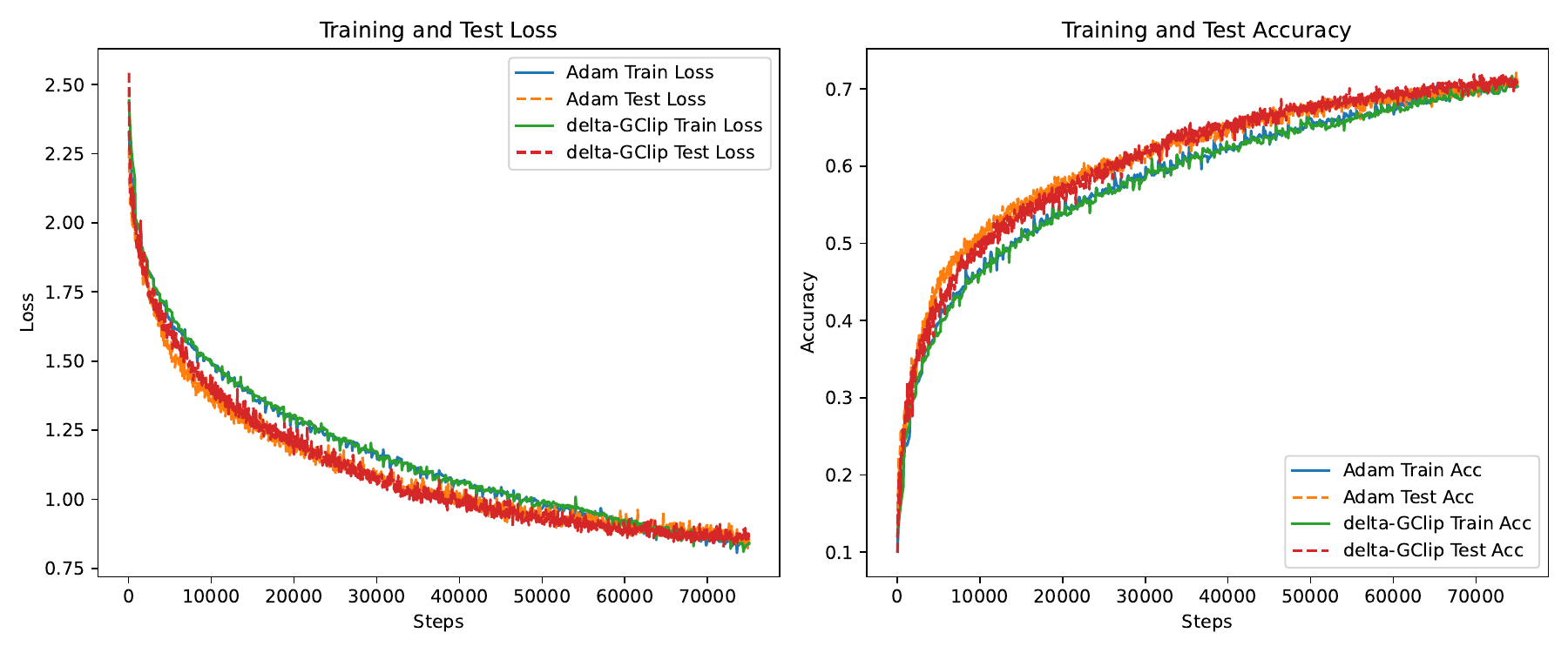}
\caption{$\delta$-GClip can be seen to be competitive against the SOTA heuristic of using Adam for training a Vision Transformer on the CIFAR-10 dataset.} \label{fig:vit} 
\end{figure}

Venturing further, we consider a fine-tuning task on a variant of the BERT \citep{bert} model DistilBertForSequenceClassification%
\footnote{\url{https://huggingface.co/transformers/v3.0.2/model\_doc/distilbert.html\#distilbertforsequenceclassification}} on the dataset of Quora-Sincere-and-Insincere-Question-Classification.%
\footnote{\url{https://github.com/mahanthmukesh/Quora-Sincere-and-Insincere-Question-Classification}} 
On this benchmark we do a hyperparameter search over $\eta$ and $\gamma$ values for $\delta$-GClip to decide on the setting of $\eta=0.1, \gamma=0.1$ and as before we set $\delta = 10^{-6}$. As shown in Figure \ref{fig:bert}, we can see that on the crucial performance metrics of training loss and test accuracy, yet again $\delta$-GClip matches the standard heuristic which is a mix of standard gradient clipping and Adam. {\em Hence in this experiment we see that our proposed $\delta$-GClip can compete the blend of adaptive gradient algorithms that is usually deployed to train a BERT.} 

\begin{figure*}[t!]
    \centering
    \begin{subfigure}{0.5\textwidth}
\includegraphics[width=\textwidth]{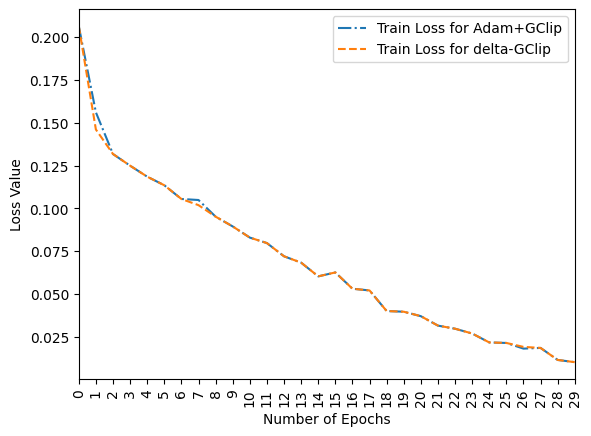}
    \end{subfigure}
    ~
    \begin{subfigure}{0.5\textwidth}
\includegraphics[width=\textwidth]{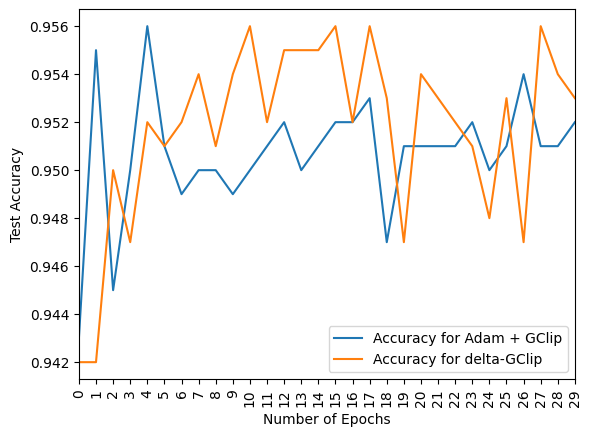}
    \end{subfigure}%
    \caption{$\delta$-GClip can be seen to be competitive against the SOTA heuristic of using a combination of Adam and GClip for training a variant of the BERT model.}\label{fig:bert}
\end{figure*}

\section{Related Works}\label{sec:review} 

In this section, we will summarize the state-of-the-art literature on standard clipping and provable deep learning optimization algorithms that are adaptive. 
At the very outset, we note that 
the Neural Tangent Kernel (NTK) approach to provable deep learning  at large widths  \citet{jacot2018_NTK, du2018gradient, chizat2019lazy, allen2019convergenceDNN, allen2019learning, zou2020gradient} is not within the scope of our review since, in general, they do not use step-length scheduling. Furthermore, it is also known that NTK based predictors are outperformed by standard deep learning architectures \citep{chen2020towards, arora2019exact}.

\paragraph{Literature Review of Theory for Adam.}

Adam was proposed in \citet{kingma2014adam} 
as an adaptive 
algorithm which requires hyperparameters $\beta_1, \beta_2 \in [0,1)$ to control the decay rates of exponential moving average estimates for the gradients and the squared gradients, respectively,
In \citet{reddi2018convergence} it was proved that, for common hyperparameter choices ($\beta_1 < \sqrt{\beta_2}$), there exists a stochastic convex optimization problem where Adam does not converge. They presented a modification to Adam that provably converges for online convex optimization. 
In \citet{deMukherjeeAdamConvergence} for the first time the convergence of Adam was established in the deterministic case, without the use of convexity, but leveraging Lipschitz smoothness and a bounded gradient norm. 

For the same optimization target as above, in \citet{chen2018convergence},
a convergence rate of $O(\log T / \sqrt{T})$ was shown for Adam-like
adaptive gradient algorithms
under the assumption of a bounded gradient oracle.
Later, a burn-in stage was added in \citet{staib2019escaping} to prove a $O(1/\sqrt{T})$ convergence rate.
In \citet{chen2018closing}, 
a partial adaptive parameter was introduced, and they proved convergence to criticality
for a class of adaptive gradient algorithms, which does not include RMSProp.
It was shown in \citet{zou2019cvpr} that generic Adam (including RMSProp) converges
with high probability under certain decaying conditions
on $\beta_2$ and step size, in contrast to the usual implementations. 
In \citet{ward2019icml}, similar
convergence results were proved for AdaGrad, which is a special case of RMSProp. 

\paragraph{Review of Theory for Adaptive Gradient Methods Training Neural Nets.} 
In contrast to the convergence to criticality results mentioned above, there have also been works providing guarantees of convergence to a global minima for adaptive methods in shallow neural network training scenarios.  \citet{AdaGradConvergenceNeuralNetworkRachel} provides a proof of the convergence of the AdaLoss adaptive algorithm to global minima on two-layer network, under widths large enough to be in the NTK regime. \citet{UnderstandingGeneralisationAdamNeuralNet} provides a proof of Adam's global convergence on two-layer convolutional neural networks to a zero-error solution whilst utilising weight decay regularization. Further, \citet{UnderstandingGeneralisationAdamNeuralNet} provide evidence that although both GD and Adam converge to zero-error solutions, GD's solution generalises significantly better. In the context of Generative Adversarial Networks (GANs), \citet{OneSidedConvergenceGANwithAdamDou} analysed the performance of Adam-like algorithms and proved the convergence of Extra Gradient AMSGrad to an $\varepsilon$-stationary point under novel assumptions they motivated. 


\paragraph{Literature Review of Gradient Clipping.}


In the smooth non-convex case, \citet{ZhangGclipTraining} proved the convergence of deterministic gradient clipping to an $\varepsilon$-stationary point under a new smoothness assumption that is strictly weaker than standard Lipschitz smoothness. Their provided iteration complexity implies that gradient clipping can converge faster than gradient descent (in constants), while achieving $\mathcal{O} (\varepsilon)$-criticality in $\mathcal{O} (\varepsilon^{-2})$ steps. They provide a similar analysis in the stochastic case, with the additional assumption of either a bound on the noise of the stochastic gradient or its distribution being symmetric sub-Gaussian. It is important to note that the provided stochastic iteration complexity does not supersede that of SGD in the general case. They had pointed out, possibly for the first time, that gradient clipping can converge, in deterministic as well as noisy settings, on smooth functions without the need for gradients to be bounded.

In \citet{zhangAdaptiveMethodsAttention}, Lipschitz smoothness is utilized while working with non-convex targets and heavy-tailed gradient stochasticity to achieve $\mathcal{O} (1 /t^{\frac{1}{4}})$ close convergence to criticality in $t$ steps, which matches that of SGD in the non-heavy-tailed setting. The same work gave a lower bound in the same setting, which matches up to constants the run-time given above and thus proving that their convergence rate is worst-case optimal. Furthermore, the said work also considered non-smooth but strongly convex functions with a bound on the expected norm of the stochastic gradients---which we recall had appeared earlier in \citet{shamirSGDnonSmooth} for non-heavy tailed settings---and achieve the same convergence, implying that the convergence rate is optimal even in the Lipschitz smooth and strongly convex setting. 

We posit that from above kinds of analysis of adaptive algorithms (including {\rm GClip}), either for depth $2$ neural networks or in the more general (non-)convex settings, there is no obvious path towards provable convergence guarantees in deep neural network training for adaptive gradient algorithms. 
However, in the recent work of \citet{pl-star-overparametrised-deep-learning}, convergence guarantees were proven for (S)GD for sufficiently wide, and arbitrarily deep neural networks, by leveraging the novel ${\rm PL^{*}}$ condition that the same paper proved to be true for squared losses for such neural nets. Next we will briefly review those results. 

\paragraph{Review of the ${\rm PL^{*}}$ Condition.}

Our motivation for studying the convergence characteristics of algorithms under the PL* condition comes from \citet{pl-star-overparametrised-deep-learning}, where it is proved that overparametrized feedforward, convolutional and residual (ResNet) neural networks can all satisfy the ${\rm PL}^*$ condition within a finite radius of the initialization, given that they are sufficiently wide. In particular, \citet{pl-star-overparametrised-deep-learning} showed the following.

\begin{theorem}
\label{thm:pl*-width-requirement}
Consider any neural network of the form described in Definition~\ref{def:setup}. If weight matrices are randomly initialized s.t. $W_{0}^{(l)} \sim \mathcal{N} (0, I_{m_l \cross m_{l-1}})$ for $l \in [0, L+1]$, and defining $\lambda_0 \coloneqq \lambda_{\min} (K_\gF(\vw_0)) > 0$ where $K_\mathcal{F} (\vw) = \mathcal{DF}(\vw) \mathcal{DF}(\vw)^{\intercal}$, then for any $\mu \in (0, \lambda_0 \rho^2) $ and the minimum layer width of the network being
\begin{equation}
    m = \Tilde{\Omega} \left ( \frac{nR^{6L+2}}{(\lambda_0 - \mu \rho^{-2})^2}\right ),
\end{equation}
the $\mu$-PL* condition holds for the square loss in the ball B($\vw_0, R$) where $R$ is a finite radius.
\end{theorem}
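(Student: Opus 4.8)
The plan is to reduce the $\mu$-PL* inequality to a uniform spectral lower bound on the tangent kernel $K_\mathcal{F}(\vw) = \mathcal{DF}(\vw)\mathcal{DF}(\vw)^{\intercal}$ over the whole ball $B(\vw_0,R)$, and then to certify that lower bound by controlling how far the kernel can drift from its value at initialization once the width is large. First, the exact-gradient identity $\nabla\mathcal{L}(\vw) = \mathcal{DF}(\vw)^{\intercal}(\mathcal{F}(\vw)-\vy)$ yields $\norm{\nabla\mathcal{L}(\vw)}^2 = (\mathcal{F}(\vw)-\vy)^{\intercal} K_\mathcal{F}(\vw)(\mathcal{F}(\vw)-\vy) \ge 2\,\lambda_{\min}(K_\mathcal{F}(\vw))\,\mathcal{L}(\vw)$, so $\mu$-PL* on $B(\vw_0,R)$ follows as soon as $\lambda_{\min}(K_\mathcal{F}(\vw))$ stays bounded below by a positive multiple of $\mu$ throughout the ball. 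The hypothesis $\abs{\sigma_{L+1}'}\ge\rho$ is what keeps this kernel nondegenerate: factoring $\mathcal{DF} = \diag(\sigma_{L+1}'(\cdot))\,\mathcal{D}\widetilde{\mathcal{F}}$, where $\widetilde{\mathcal{F}}$ is the pre-activation map, relates $K_\mathcal{F}$ to the pre-activation tangent kernel up to a factor $\rho^2$, which is the origin of both the admissible range $\mu < \lambda_0\rho^2$ and the margin $\lambda_0 - \mu\rho^{-2}$ appearing in the width requirement.

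At initialization the quantity $\lambda_0 = \lambda_{\min}(K_\mathcal{F}(\vw_0))$ is positive by hypothesis, so the task becomes a perturbation estimate. I would bound $\lambda_{\min}(K_\mathcal{F}(\vw)) \ge \lambda_0 - \norm{K_\mathcal{F}(\vw) - K_\mathcal{F}(\vw_0)}$ by Weyl's inequality, and then control the kernel difference through the Jacobian difference via $\norm{K_\mathcal{F}(\vw) - K_\mathcal{F}(\vw_0)} \lesssim \big(\norm{\mathcal{DF}(\vw)} + \norm{\mathcal{DF}(\vw_0)}\big)\,\norm{\mathcal{DF}(\vw) - \mathcal{DF}(\vw_0)}$. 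It therefore remains to show that, with high probability over the Gaussian initialization, the Jacobian is nearly constant across the ball, i.e. $\sup_{\vw\in B(\vw_0,R)}\norm{\mathcal{DF}(\vw) - \mathcal{DF}(\vw_0)}$ is forced to be small once $m$ is large enough.

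The technical heart, and the step I expect to be the main obstacle, is this near-constancy of the Jacobian --- the ``transition to linearity'' of wide networks. The plan is a layer-by-layer induction on the depth $L$: under the scaling $\tfrac{1}{\sqrt{m_{l-1}}}W^{(l)}$ and Gaussian initialization, standard concentration gives, with high probability, uniform control on the norms of the hidden activations $\alpha^{(l)}$ and of the layerwise Jacobian blocks for every $\vw\in B(\vw_0,R)$; propagating these bounds through the chain rule shows that the spectral norm of the Hessian of each scalar output $f(\vw;\vx_i)$ is of order $\mathrm{poly}(R)/\sqrt{m}$, with the depth dependence accumulating to the power $R^{3L+1}$ per output. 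Integrating this Hessian bound along the segment from $\vw_0$ to $\vw$ gives $\norm{\mathcal{DF}(\vw)-\mathcal{DF}(\vw_0)} \lesssim \mathrm{poly}(R)/\sqrt{m}$, and squaring this exponent in the kernel-difference bound while summing over the $n$ samples is exactly what surfaces the $nR^{6L+2}$ numerator. Choosing $m = \Tilde{\Omega}\big(nR^{6L+2}/(\lambda_0-\mu\rho^{-2})^2\big)$ then forces $\norm{K_\mathcal{F}(\vw)-K_\mathcal{F}(\vw_0)} \le \lambda_0 - \mu\rho^{-2}$ uniformly, which via the first paragraph delivers $\mu$-PL* throughout $B(\vw_0,R)$.

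Beyond the concentration bookkeeping, the subtle point to handle with care is the self-referential nature of the radius: $R$ is not free but is tied (through the companion convergence statement) to the optimization trajectory, so one must verify that the width chosen to guarantee PL* in $B(\vw_0,R)$ is consistent with the very $R$ that the PL* guarantee then produces --- a fixed-point consistency which, together with upgrading the pointwise high-probability bounds to hold \emph{simultaneously and uniformly} over the continuum of $\vw$ in the ball (via a covering/net argument combined with the established Lipschitz control of the relevant quantities), constitutes the genuine work of the proof.
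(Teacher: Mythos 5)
First, a framing point: the paper itself does not prove this statement --- Theorem~\ref{thm:pl*-width-requirement} is imported verbatim from \citet{pl-star-overparametrised-deep-learning}, and is used as a black box in the proof of Theorem~\ref{mainthm:deltaGClip-neuralnetwork}. So your proposal can only be compared against the cited work's proof, whose skeleton you have largely reconstructed correctly: the reduction of $\mu$-PL$^*$ to a uniform lower bound on $\lambda_{\min}(K_\mathcal{F})$ over the ball (your first display, including the factor $2$ from the $\tfrac{1}{2}$ in the loss, is right), a Weyl-type perturbation argument, and a width-driven control of the kernel drift via a Hessian (``transition to linearity'') bound.

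There is, however, a genuine flaw in how you assemble the pieces. Your central technical claim --- that the Hessian of each scalar output $f(\vw;\vx_i)$ has spectral norm $O(\mathrm{poly}(R)/\sqrt{m})$, hence $\norm{\mathcal{DF}(\vw)-\mathcal{DF}(\vw_0)}$ is small and $K_\mathcal{F}$ is nearly constant on $B(\vw_0,R)$ --- is false for the map $\mathcal{F}$ of Definition~\ref{def:setup}, because $f=\sigma_{L+1}\circ\widetilde{f}$ carries a nonlinear output activation. Writing $\nabla^2 f = \sigma_{L+1}''(\widetilde{f})\,\nabla\widetilde{f}\,\nabla\widetilde{f}^{\intercal} + \sigma_{L+1}'(\widetilde{f})\,\nabla^2\widetilde{f}$, the first term has norm of order $\beta_{\sigma}\norm{\nabla\widetilde{f}}^2=\Theta(1)$, independent of the width; only the pre-activation map $\widetilde{f}$ transitions to linearity. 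Correspondingly, $\mathcal{DF}(\vw)-\mathcal{DF}(\vw_0)$ contains the term $\bigl(\sigma_{L+1}'(\widetilde{f}(\vw))-\sigma_{L+1}'(\widetilde{f}(\vw_0))\bigr)\nabla\widetilde{f}$, which is $O(\beta_\sigma R)$ rather than $O(1/\sqrt{m})$, so $K_\mathcal{F}$ genuinely drifts by $O(1)$ across the ball and Weyl applied to $K_\mathcal{F}$ cannot deliver the uniform bound. The repair is exactly the route you gesture at in your first paragraph and then abandon: run the Hessian bound and the Weyl perturbation on the pre-activation kernel $K_{\widetilde{\mathcal{F}}}$, conclude $\lambda_{\min}(K_{\widetilde{\mathcal{F}}}(\vw))\ge\mu\rho^{-2}$ throughout the ball under the stated width, and then transfer to the true kernel via $K_\mathcal{F}=D\,K_{\widetilde{\mathcal{F}}}\,D$ with $D=\diag\bigl(\sigma_{L+1}'(\widetilde{f}(\vw;\vx_i))\bigr)$, giving $\lambda_{\min}(K_\mathcal{F})\ge\rho^2\lambda_{\min}(K_{\widetilde{\mathcal{F}}})\ge\mu$. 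Indeed, had the full-map Jacobian been nearly constant, $\rho$ would play no role at all; its appearance in the admissible range $\mu<\lambda_0\rho^2$ and in the margin $\lambda_0-\mu\rho^{-2}$ is the signature of this detour through the pre-activation kernel. (A minor additional remark: the ``fixed-point consistency'' of $R$ that you flag belongs to the downstream Theorem~\ref{mainthm:deltaGClip-neuralnetwork}, which chooses a specific $R$ before invoking this result; in the present statement $R$ is a free parameter, so no such circularity needs to be resolved here.)
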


Therefore, a path opened up, that by proving that the iterates of our \delGClip algorithm never leave a ball of finite radius, and proving the convergence of \delGClip on locally smooth $\mu$-PL* functions, we can argue for the algorithm's convergence to the loss global minima in such neural nets.

\section{Proof of the Main Result }
\label{sec:main-theorem-proof}

Towards giving the proof of the main theorem for $\delta$-GClip algorithm (Theorem~\ref{mainthm:deltaGClip-neuralnetwork}), we begin with listing the lemmas needed for that.

\subsection{Preparatory Lemmas}
\label{subsec:detlem}

\begin{lemma}
\label{pl*-convergence-adaptive-stepsize}
Corresponding to constants $a,b >0$ and $a\mu <1$ suppose a loss function $\mathcal{L}$ is $\beta$-smooth, $\min \mathcal{L} = 0$, and satisfies the $\mu$-PL* condition within a Euclidean ball B($\vw_0$, R), with $R \ge \displaystyle \frac{b \sqrt{2 \beta} \sqrt{\mathcal{L} (\vw_0)}}{1 - \sqrt{1 - \frac{1}{2} \cdot a \mu}}$. Then there exists a global minimizer $\vw_* \in B(\vw_0, R)$ of $\mathcal{L}$, such that $\mathcal{L} (\vw_*) = 0$. Furthermore, given a first order adaptive step size algorithm of the form
\begin{equation}
    \vw_{t+1} = \vw_t - h(\vw_t) \cdot \nabla \mathcal{L} (\vw_t),
\end{equation}
where $h(\vw_t)$ is a time/iterate-dependent function  such that $0 < a \le h(\vw_t) \le b < \min \{ \frac{1}{\beta}, \frac{1}{\mu}\}$, then the algorithm will converge with convergence rate
\begin{equation}
    \mathcal{L} (\vw_{t}) \le \mathcal{L} (\vw_{0}) (1 - \frac{1}{2} \cdot a \mu)^{t}.
\end{equation}
\end{lemma}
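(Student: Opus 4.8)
The plan is to establish both conclusions—existence of a zero-loss minimizer inside $B(\vw_0,R)$ and the geometric rate—through a single \emph{coupled induction} whose whole purpose is to certify that the iterates never escape the ball, since the $\mu$-PL* hypothesis is only available there.

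\textbf{Per-step descent.} First I would use $\beta$-smoothness on the update $\vw_{t+1}=\vw_t-h(\vw_t)\nabla\mathcal{L}(\vw_t)$ to obtain the standard quadratic upper model
\[
\mathcal{L}(\vw_{t+1}) \le \mathcal{L}(\vw_t) - h(\vw_t)\Bigl(1-\tfrac{\beta}{2}h(\vw_t)\Bigr)\norm{\nabla\mathcal{L}(\vw_t)}^2 .
\]
The bound $h(\vw_t)\le b<1/\beta$ forces $1-\tfrac{\beta}{2}h(\vw_t)\ge \tfrac12$, and then $h(\vw_t)\ge a$ gives the coefficient a uniform lower bound $\tfrac{a}{2}$, so that $\mathcal{L}(\vw_{t+1})\le \mathcal{L}(\vw_t)-\tfrac{a}{2}\norm{\nabla\mathcal{L}(\vw_t)}^2$ holds at every point. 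This inequality uses only the two-sided bound on $h$ and not the PL* property.

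\textbf{Contraction inside the ball.} If $\vw_t\in B(\vw_0,R)$, I would invoke $\mu$-PL* ($\norm{\nabla\mathcal{L}(\vw_t)}^2\ge\mu\mathcal{L}(\vw_t)$) to turn the descent inequality into $\mathcal{L}(\vw_{t+1})\le q\,\mathcal{L}(\vw_t)$ with $q\coloneqq 1-\tfrac12 a\mu$. Here the hypothesis $a\mu<1$ (which follows from $a\le b<1/\mu$) guarantees $q\in(\tfrac12,1)$, so $\sqrt{q}<1$ and the associated geometric series converges. The delicate point is that $\mathcal{L}(\vw_t)\le q^t\mathcal{L}(\vw_0)$ is only licensed \emph{while} all prior iterates have stayed in the ball, so I set up the induction hypothesis ``$\vw_0,\dots,\vw_t\in B(\vw_0,R)$,'' derive the decay up to step $t$, and then must promote it to $\vw_{t+1}$.

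\textbf{Closing the induction (the crux).} To show $\vw_{t+1}$ stays in the ball I telescope the displacement and bound each step length by $b$:
\[
\norm{\vw_{t+1}-\vw_0}\le \sum_{s=0}^{t} h(\vw_s)\norm{\nabla\mathcal{L}(\vw_s)} \le b\sum_{s=0}^{t}\norm{\nabla\mathcal{L}(\vw_s)} .
\]
The key ingredient is the self-bounding estimate for $\beta$-smooth nonnegative functions with $\min\mathcal{L}=0$, namely $\norm{\nabla\mathcal{L}(\vw_s)}^2\le 2\beta\,\mathcal{L}(\vw_s)$, combined with the induction-supplied decay to give $\norm{\nabla\mathcal{L}(\vw_s)}\le \sqrt{2\beta}\sqrt{\mathcal{L}(\vw_0)}\,q^{s/2}$. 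Summing the geometric series in $\sqrt{q}$ (extending to $\infty$) yields $\norm{\vw_{t+1}-\vw_0}\le \tfrac{b\sqrt{2\beta}\sqrt{\mathcal{L}(\vw_0)}}{1-\sqrt{q}}=R$, which is exactly the radius assumed, so $\vw_{t+1}\in B(\vw_0,R)$ and the induction closes; the rate $\mathcal{L}(\vw_t)\le \mathcal{L}(\vw_0)q^t$ then holds for all $t$. Finally, since $\sum_s\norm{\vw_{s+1}-\vw_s}$ converges, $\{\vw_t\}$ is Cauchy and converges to some $\vw_*\in\overline{B(\vw_0,R)}$; continuity of $\mathcal{L}$ with $\mathcal{L}(\vw_t)\to0$ gives $\mathcal{L}(\vw_*)=0$, producing the claimed minimizer. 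I expect the \emph{main obstacle} to be precisely this interlocked argument: the PL* decay is needed to prove containment, yet containment is needed to invoke PL*, and the radius $R$ is engineered so the two meet; a secondary care point is justifying $\norm{\nabla\mathcal{L}}^2\le2\beta\mathcal{L}$, which implicitly requires the smoothness estimate to reach toward the minimizer—benign under global smoothness here, but something to handle carefully when only local smoothness in $B(\vw_0,R)$ is available, as in the main theorem.
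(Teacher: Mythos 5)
Your proposal is correct and follows essentially the same route as the paper's proof: a coupled induction in which smoothness plus the step-size bounds give per-step descent, the PL* condition (valid only inside the ball) gives the contraction factor $1-\tfrac12 a\mu$, and the self-bounding estimate $\norm{\nabla\mathcal{L}}^2\le 2\beta\mathcal{L}$ together with the geometric series keeps the iterates within radius $R$. Your explicit Cauchy-sequence argument for the existence of the zero-loss limit $\vw_*$ is a small completion the paper leaves implicit, but it does not change the structure of the argument.
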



\begin{lemma}
\label{lemma-d-gclip-upper-lower-bound}
    The \delGClip step size $h(\vw)$ is bounded $\eta\delta \le h(\vw) \le \eta$, given that $0 <\delta < 1$.
\end{lemma}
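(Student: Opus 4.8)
The plan is to prove both inequalities directly from the definition of the $\delta$-GClip step size by unwinding the nested $\min$ and $\max$ and performing an elementary case analysis on the magnitude of the gradient norm. Writing $r \coloneqq \gamma / \norm{\nabla f(\vw)} \ge 0$, the step size is $h(\vw) = \eta \cdot \min\{1, \max\{\delta, r\}\}$, so since $\eta > 0$ it suffices to establish $\delta \le \min\{1, \max\{\delta, r\}\} \le 1$ and then multiply through by $\eta$.

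The upper bound is immediate: $\min\{1, \max\{\delta, r\}\} \le 1$ by definition of the outer $\min$, whence $h(\vw) \le \eta$. For the lower bound, I would first observe that the inner term obeys $\max\{\delta, r\} \ge \delta$ irrespective of the value of $r$, and then split into two cases according to whether $\max\{\delta, r\} \le 1$ or $\max\{\delta, r\} > 1$. In the first case the outer $\min$ returns $\max\{\delta, r\} \ge \delta$; in the second it returns $1$, and the hypothesis $0 < \delta < 1$ gives $1 > \delta$, so we again land at a quantity at least $\delta$. Combining the cases gives $\min\{1, \max\{\delta, r\}\} \ge \delta$, hence $h(\vw) \ge \eta\delta$.

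There is no genuine obstacle here; the statement follows from the monotonicity of $\min$ and $\max$ together with the assumption $\delta < 1$, which is precisely what guarantees that the clamp to $1$ never forces the step size below the floor $\delta$. The only point worth a brief remark is the degenerate case $\norm{\nabla f(\vw)} = 0$, where $r$ is read as $+\infty$: then $\max\{\delta, r\} = +\infty$ and the outer $\min$ returns $1$, so $h(\vw) = \eta$, which is consistent with both bounds.
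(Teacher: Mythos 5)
Your proof is correct and takes essentially the same route as the paper's: an elementary case analysis resolving the nested $\min$ and $\max$ (you split on the outer $\min$, the paper on whether $\norm{\nabla\mathcal{L}(\vw_t)} \ge \gamma/\delta$, which is the same content). Your explicit treatment of the degenerate case $\norm{\nabla f(\vw)} = 0$ is a small point of additional care not present in the paper's version.
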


\begin{lemma}{ (\delGClip Converges on smooth ${\rm PL}^*$ functions.)}
\label{lemma:delta-GClip-converges-PL*}
    Corresponding to positive constants $\eta, \delta, \beta, \mu$ such that $\eta < \min \{ 1/\beta, 1/\mu \}$ and $0 < \delta < 1$, suppose there exists a loss function $\mathcal{L}$ that is $\beta$-smooth, lower-bounded by 0, and satisfies the $\mu-${\rm PL*} condition within an Euclidean ball B($\vw_0, R$) where $R \ge \frac{\eta \sqrt{2 \beta} \sqrt{\mathcal{L} (\vw_0)}}{1 - \sqrt{1 - \frac{1}{2} \cdot \eta \delta \mu}}$. Then there exists a global minimizer $\vw_* \in B(\vw_0, R)$ of $\mathcal{L}$, such that $\mathcal{L} (\vw_*) = 0$. Furthermore, \delGClip  will converge at rate
    \begin{equation}
    \mathcal{L} (\vw_t) \le \mathcal{L} (\vw_0) (1- \frac{1}{2} \cdot  \eta\delta\mu)^t.
    \end{equation}
\end{lemma}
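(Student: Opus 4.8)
The plan is to recognize Lemma~\ref{lemma:delta-GClip-converges-PL*} as a direct instantiation of the general adaptive-step-size convergence result already established in Lemma~\ref{pl*-convergence-adaptive-stepsize}. That earlier lemma proves both the existence of a zero-loss global minimizer in $B(\vw_0, R)$ and the geometric convergence rate for \emph{any} update of the form $\vw_{t+1} = \vw_t - h(\vw_t)\nabla\mathcal{L}(\vw_t)$ whose step size is sandwiched as $0 < a \le h(\vw_t) \le b < \min\{1/\beta, 1/\mu\}$, under the matching radius hypothesis. So the whole task reduces to checking that $\delta$-GClip is such a method for an appropriate choice of the constants $a$ and $b$.

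First I would set $a = \eta\delta$ and $b = \eta$. By Lemma~\ref{lemma-d-gclip-upper-lower-bound}, the $\delta$-GClip step size obeys $\eta\delta \le h(\vw) \le \eta$ precisely because $0 < \delta < 1$, which is exactly the sandwich $a \le h(\vw_t) \le b$ demanded by Lemma~\ref{pl*-convergence-adaptive-stepsize}. It then remains to verify the side conditions on the constants. The upper bound $b = \eta < \min\{1/\beta, 1/\mu\}$ is exactly the standing hypothesis on $\eta$. The requirement $a\mu < 1$ follows since $a\mu = \eta\delta\mu < \eta\mu < 1$, using $\delta < 1$ together with $\eta < 1/\mu$. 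Finally, substituting $a = \eta\delta$ and $b = \eta$ into the radius bound $R \ge b\sqrt{2\beta}\sqrt{\mathcal{L}(\vw_0)} / (1 - \sqrt{1 - \tfrac{1}{2}a\mu})$ of Lemma~\ref{pl*-convergence-adaptive-stepsize} reproduces verbatim the radius hypothesis stated here.

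With all hypotheses of Lemma~\ref{pl*-convergence-adaptive-stepsize} confirmed, I would simply invoke it: it yields a global minimizer $\vw_* \in B(\vw_0, R)$ with $\mathcal{L}(\vw_*) = 0$, and the convergence rate $\mathcal{L}(\vw_t) \le \mathcal{L}(\vw_0)(1 - \tfrac{1}{2}a\mu)^t$, which upon substituting $a = \eta\delta$ is exactly the claimed bound $\mathcal{L}(\vw_t) \le \mathcal{L}(\vw_0)(1 - \tfrac{1}{2}\eta\delta\mu)^t$. I expect no genuine analytic obstacle in this lemma, since all the substantive work---the descent inequality from $\beta$-smoothness, the use of the $\mu$-PL* inequality to telescope the loss, and the inductive argument that the iterates never leave $B(\vw_0, R)$---is carried out inside Lemma~\ref{pl*-convergence-adaptive-stepsize}. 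The only thing to be careful about is the bookkeeping of the constants: making sure the strict inequality $a\mu < 1$ and the threshold $b < \min\{1/\beta, 1/\mu\}$ both hold, so that the cited lemma applies without modification.
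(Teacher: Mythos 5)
Your proposal is correct and follows exactly the paper's own route: the paper likewise proves this lemma by combining the step-size sandwich $\eta\delta \le h(\vw_t) \le \eta$ from Lemma~\ref{lemma-d-gclip-upper-lower-bound} with the general adaptive-step-size result of Lemma~\ref{pl*-convergence-adaptive-stepsize}, instantiated at $a = \eta\delta$ and $b = \eta$. Your additional bookkeeping checks (that $a\mu < 1$ and $b < \min\{1/\beta, 1/\mu\}$ hold under the stated hypotheses) are implicit in the paper's one-line verification and are entirely sound.
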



The proofs for Lemmas \ref{pl*-convergence-adaptive-stepsize}, \ref{lemma-d-gclip-upper-lower-bound} and \ref{lemma:delta-GClip-converges-PL*} can be found in Subsection~\ref{subsec:lemma-proofs}.

\subsection{Proof of Theorem~\ref{mainthm:deltaGClip-neuralnetwork}}

\begin{proof}
Firstly, we invoke the assumption that the initialization is s.t that the conditions of Theorem~\ref{thm:pl*-width-requirement} apply, which we know from therein to be a high-probability event.  In particular we conclude that $\gL$ satisfies $\mu-$PL$^*$ within a finite ball B($\vw_0, R$) for some $R > 0$ and that the tangent kernel at initialization is positive definite.

If $L_{\sigma}$ and $\beta_\sigma$ are the Lipschitz constant and the Lipschitz smoothness coefficients for the activation $\sigma$ then it was shown in \citet{pl-star-overparametrised-deep-learning}, that we have for the prediction map $\gF$,  its Lipschitz constant,
\begin{equation}L_{\mathcal{F}} \le L_{\sigma} \left ( \sqrt{\norm{ K_\mathcal{F}(\vw_0)}} + R \sqrt{n} \cdot  O(R^{3L} / \sqrt{m}) \right ), \nonumber
\end{equation}
as well as its smoothness constant,
\begin{align}
\beta_{\mathcal{F}} \le & ~\beta_\sigma L_{\sigma} \left ( \sqrt{\norm{ K_\mathcal{F}(\vw_0)}} + R \sqrt{n} \cdot  O(R^{3L} / \sqrt{m}) \right ) 
+ L_\sigma \cdot O(R^{3L}/\sqrt{m}). \nonumber
\end{align}
Where $K_\mathcal{F}$ is the neural tangent kernel (recall that $K_\mathcal{F} = \mathcal{DF}(\vw) \mathcal{DF}(\vw)^{\intercal}$). 


By plugging in the lowerbound on $m$ specified in the theorem, we get that both $L_\mathcal{F}$ and $\beta_\mathcal{F}$ are upper bounded by a constant and thus $m$-independent. If $H_{\gL}$ is the Hessian of the loss function, then by \citet{pl-star-overparametrised-deep-learning} we also have that
\begin{equation}
\beta_\mathcal{L} = \sup_{\vw \in B(\vw_0, R)} \norm{H_{\mathcal{L}} (\vw)} \le L_{\mathcal{F}}^2 + \beta_{\mathcal{F}} \cdot \norm{\mathcal{F} (\vw_0) - \vy} \nonumber
\end{equation}
By \citet{jacot2018_NTK}, we have that $\norm{\mathcal{F} (\vw_0) - \vy}$ is also $m$-independent with high probability for the given size of the net. Therefore, $\mathcal{L}$ can be said to be $\beta_{\mathcal
L}$-smooth within B($\vw_0, R$), where $\beta_{\mathcal{L}}$ is $m$- and thus $R$-independent. Hence, we can say that for every $R > 0$, for some width which satisfies the given condition, the loss function is $\beta$-smooth (and by Theorem~\ref{thm:pl*-width-requirement}, $\mu$-${\rm PL}^*$) in B$(\vw_0, R)$ with high probability.

Thus far the argument above was parametric in $R$. But given that we satisfy all the conditions to invoke Lemma \ref{lemma:delta-GClip-converges-PL*} we can compute from it the minimum $R$ value required such that the iterates of regularized gradient clipping never leave B($\vw_0, R$), i.e 
\begin{equation}
R  = \frac{\eta \sqrt{2 \beta} \sqrt{\mathcal{L} (\vw_0)}}{1 - \sqrt{1 - \frac{1}{2} \cdot \eta \delta \mu}}, \nonumber
\end{equation}
and conclude that \delGClip converges to a zero-loss solution within B($\vw_0, R$) at a convergence rate of $\mathcal{L} (\vw_t) \le \mathcal{L} (\vw_0) (1-\eta\delta\mu)^t$. 
\end{proof}

\subsection{Proofs of the Lemmas }
\label{subsec:lemma-proofs}

\label{subsubsec:proof-adaptive-algos-converge-PL*}
\begin{proof}{\bf (of Lemma \ref{pl*-convergence-adaptive-stepsize})}
We shall prove the theorem by induction and our hypothesis is that, up to step $t$, $\vw_t \in B(\vw_0, R)$ for the given $R$, $\mathcal{L} (\vw_{t}) \le \mathcal{L} (\vw_{0}) (1 - \frac{1}{2} \cdot a \mu)^{t}$ and thus up to $t$ the algorithm explored a region where the $\mu-${\rm PL}$^*$ condition holds. The base case is trivial, when $t = 0$ then $\vw_0 \in B(\vw_0, R)$. Now we set out to prove that these continue to hold at $t + 1$ too. 

From the assumptions that, $\mathcal{L}$ is $\beta$-smooth, we have
\begin{equation}
\label{pl*-beta-smooth-ineq-gclip}
    \mathcal{L} (\vw_{t+1}) - \mathcal{L} (\vw_t) - \transp{\nabla \mathcal{L} (\vw_t)} (\vw_{t+1} - \vw_t) \le \frac{\beta}{2}  \norm{\vw_{t+1} - \vw_t}^2 .
\end{equation}


As $h(\vw_t) < \min \{ \frac{1}{\beta}, \frac{1}{\mu} \}$, we have that $\frac{1}{h(\vw_t)} > \beta$, hence we relax the above inequality to get, $\mathcal{L} (\vw_{t+1}) - \mathcal{L} (\vw_t) - \transp{\nabla \mathcal{L} (\vw_t)} (\vw_{t+1} - \vw_t) \le \frac{1}{2 h(\vw_t)}  \norm{\vw_{t+1} - \vw_t}^2$.

Using the definition of the algorithm, that $\vw_{t+1} - \vw_t  = - h(\vw_t) \nabla \mathcal{L} (\vw_t)$, we can rearrange the above to get
\begin{equation}
\label{pl*-adaptive-step-proof-decreasing-L} 
    \mathcal{L} (\vw_{t+1}) - \mathcal{L} (\vw_t) \le - \frac{h(\vw_t)}{2} \norm{\nabla \mathcal{L} (\vw_t)}^2
\end{equation}
Next, we use the induction hypothesis for the $\mu$-PL* condition at the current iterate, $\norm{\nabla \mathcal{L} (\vw_t)}^2 \ge \mu \mathcal{L} (\vw_t)$, to get
\begin{equation*}
    \mathcal{L} (\vw_{t+1}) - \mathcal{L} (\vw_t) \le - \frac{h(\vw_t)}{2} \norm{\nabla \mathcal{L} (\vw_t)}^2 \le - \frac{h(\vw_t) \mu}{2} \mathcal{L} (\vw_t)
\end{equation*}
And the above can be rearranged to
\begin{equation}
\mathcal{L} (\vw_{t+1}) \le (1 - \frac{1}{2} \cdot h(\vw_t) \mu) \mathcal{L} (\vw_t).
\end{equation}
Note that for the convergence rate to hold, $h(\vw_t)$ must be bounded such that $\forall t$, $(1 - \frac{1}{2} \cdot h(\vw_t) \mu) \in (0,1)$ and this follows from the bounds on $a, b$.
We then unroll the recursion to get
\begin{equation}
\begin{array}{r@{~\le~}l}
    \mathcal{L} (\vw_{t+1}) & \mathcal{L} (\vw_{0}) \cdot  \displaystyle \prod_{i = 0}^{t} (1 - \frac{h(\vw_i) \mu}{2}) \\ [2ex]
  &\displaystyle \mathcal{L} (\vw_{0}) (1 - \frac{1}{2} \cdot a \mu)^{t+1}
\end{array}
\end{equation}
where the last inequality comes from $0 < a \le h(\vw_t)$.
Therefore, assuming that the convergence rate holds till time $t$ implies that it also holds till $t+1$. 

Next we embark on proving that $\vw_{t+1} \in B(\vw_0, R)$. From the algorithm's update equation, the triangle inequality, and recalling that $h(\vw_t) \le b$, we get
\begin{equation}
\label{pl*-Gclip-proof-eta-sum-grad-l}
    \norm{\vw_{t+1} - \vw_0} \le \displaystyle \sum_{i=0}^t \norm{ h(\vw_i) \cdot \nabla \mathcal{L} (\vw_i)} \le b \sum_{i=0}^t \norm{\nabla \mathcal{L} (\vw_i)}.
\end{equation}
We can rearrange the $\beta$-smoothness inequality from equation \ref{pl*-beta-smooth-ineq-gclip} and apply Cauchy-Schwarz, to get
\begin{align}
0  \le & ~\frac{\beta}{2} \norm{\vw_{t+1}  - \vw_t}^2 + \norm{\nabla \mathcal{L} (\vw_t)} \norm{\vw_{t+1} - \vw_t} 
+ \mathcal{L} (\vw_t) - \mathcal{L} (\vw_{t+1}).
\end{align}
We can relax the above inequality by dropping the $\LL{t+1}$ term and treating the above as a quadratic in $\norm{\vw_{t+1} - \vw_t}$, to then conclude that the inequality only holds if the discriminant is non-positive, $\norm{\nabla \mathcal{L} (\vw_t)} \le \sqrt{2 \beta \mathcal{L} (\vw_t)}$.
Substituting this into equation \ref{pl*-Gclip-proof-eta-sum-grad-l} we get
\begin{equation}    
    \norm{\vw_{t+1} - \vw_0} \le b \displaystyle \sum_{i=0}^t \sqrt{2 \beta \mathcal{L} (\vw_i)}.
\end{equation}
Using the assumed convergence rate till the current iterate we get
\begin{align}
\label{pl*-gclip-proof-sum-prod-bound}
\norm{\vw_{t+1} - \vw_0}
\le 
b \sqrt{2 \beta} \sqrt{\mathcal{L} (\vw_0)} \ \cdot \ \left ( \displaystyle \sum_{i=0}^t \prod_{j = 0}^{i} (1 - \frac{1}{2} \cdot h(\vw_j) \mu)^{1/2}  \right ).
\end{align}
Since $a \le h(\vw_t) < 1/\mu$, we have, $ 0 < 1 - \frac{1}{2} \cdot h(\vw_t) \mu < 1 - \frac{1}{2} \cdot a \mu < 1$. Thus we get
\begin{equation}
    \norm{\vw_{t+1} - \vw_0} \le b \sqrt{2 \beta} \sqrt{\mathcal{L} (\vw_0)} \cdot \left ( \displaystyle \sum_{i=0}^t (1 - \frac{1}{2} \cdot a \mu)^{i/2}  \right ).
\end{equation}
Upper-bounding the above by the closed-form expression for the infinite geometric series, we get
\begin{equation}
\norm{\vw_{t+1} - \vw_0} \le \frac{b \sqrt{2 \beta} \sqrt{\mathcal{L} (\vw_0)}}{1 - \sqrt{1 - \frac{1}{2} \cdot a \mu}} \leq R.
\end{equation}
The last inequality follows by the definition of $R$ and hence we have proven that $\vw_{t+1} \in B(\vw_0, R)$, and hence up to time $t+1$ the algorithm is still exploring the region within which the $\mu-$PL* condition holds.  

Thus induction follows and we have that $\forall t$, $\vw_{t} \in B(\vw_0, R)$ and $\mathcal{L} (\vw_{t}) \le \mathcal{L} (\vw_{0}) (1 - \frac{1}{2} \cdot a \mu)^{t}$.
\end{proof}

\paragraph{Proof of \delGClip Having a Bounded Step Size} 

\begin{proof}{\bf (of Lemma \ref{lemma-d-gclip-upper-lower-bound})}
Utilising \delGClip's definition for $h$, we get that if $\norm{\nabla \mathcal{L} (\vw_t)} \ge \gamma / \delta$, then
$h(\vw_t) = \min \{ \eta, \eta \delta \}$.
Otherwise, if $\norm{\nabla \mathcal{L} (\vw_t)} < \gamma / \delta$, then $h(\vw_t) = \min \left \{ \eta,~ \eta \gamma / \norm{\nabla \mathcal{L} (\vw_t)} \right \}$. The smallest possible $h$ for the above would be if $\norm{\nabla \mathcal{L} (\vw_t)}$ was as large as it could be, which would result in $h(\vw_t) = \min \left \{ \eta, \eta \delta \right \}$. As $\delta < 1$, we conclude $ 0 < \eta \delta \le h(\vw_t) \le \eta$.
\end{proof}




\paragraph{Proof of {\bf \delGClip} Convergence on Smooth ${\rm PL}^*$ Functions}
\label{subsubsec:proof-d-gClip-convergence-pl*}

\begin{proof}{\bf (of Lemma \ref{lemma:delta-GClip-converges-PL*})}
From Lemma \ref{lemma-d-gclip-upper-lower-bound}, we know that {\rm \delGClip} satisfies the condition $0 < \eta \delta \le h(\vw_t) \le \eta $. Therefore, by setting $\eta < \min \{ \frac{1}{\beta}, \frac{1}{\mu}\}$ and $\delta < 1$, we can apply Lemma \ref{pl*-convergence-adaptive-stepsize} and obtain the convergence rate
\begin{equation}
    \mathcal{L} (\vw_t) \le \mathcal{L} (\vw_0) (1- \frac{1}{2} \cdot \eta\delta\mu)^t,
\end{equation}
as well as that the PL* condition must hold within a ball B($\vw_0$, R), where $\displaystyle R \geq \frac{\eta \sqrt{2 \beta} \sqrt{\mathcal{L} (\vw_0)}}{1 - \sqrt{1 - \frac{1}{2} \cdot \eta \delta \mu}}$.
\end{proof}
\section{Conclusion}\label{sec:conc}
In this work, we have presented a new adaptive gradient algorithm, {\rm \delGClip}, that provably trains deep neural networks (at any depth) with arbitrary data and while training on the squared loss. To the best of our knowledge, such a guarantee does not exist for {\em any} previously known adaptive gradient method. 
Additionally, we have also given experimental evidence that our algorithm is competitive with the deep learning algorithms in current use, and sometimes outperforming them.  Our proof critically hinges on the interplay between the modification we do to standard gradient clipping and  the $\mu$-PL* condition that has previously been shown to be true for squared losses on deep neural nets of sufficient width.


Our work suggests an immediate direction of future research into establishing convergence guarantees for regularized gradient clipping on other standard losses in use, such as cross-entropy and for nets with ${\relu}$ activation. Further, the demonstrated success of $\delta-$GClip on certain transformer architectures motivates study of the possible validity of the $\mu$-PL* condition for these models too. 

Lastly, we note that recently reported heuristics which are particularly good for LLM training, cf. \citet{liu2023sophia}, can also be seen as modifications of the clipping algorithm. We envisage exciting lines of investigation that could open up in trying to explore the efficacy of these new developments crossed with the provably performant modification of gradient clipping that we have instantiated here.


\bibliography{tmlr}

\begin{thebibliography}{43}
\providecommand{\natexlab}[1]{#1}
\providecommand{\url}[1]{\texttt{#1}}
\expandafter\ifx\csname urlstyle\endcsname\relax
  \providecommand{\doi}[1]{doi: #1}\else
  \providecommand{\doi}{doi: \begingroup \urlstyle{rm}\Url}\fi

\bibitem[Abadi et~al.(2016)Abadi, Chu, Goodfellow, McMahan, Mironov, Talwar,
  and Zhang]{deepLearningDifferentialPrivacyClippedGrad}
Martin Abadi, Andy Chu, Ian Goodfellow, H.~Brendan McMahan, Ilya Mironov, Kunal
  Talwar, and Li~Zhang.
\newblock Deep learning with differential privacy.
\newblock In \emph{Proceedings of the 2016 {ACM} {SIGSAC} Conference on
  Computer and Communications Security}. {ACM}, Oct 2016.
\newblock URL \url{https://doi.org/10.1145/2976749.2978318}.

\bibitem[Allen-Zhu et~al.(2019{\natexlab{a}})Allen-Zhu, Li, and
  Liang]{allen2019learning}
Zeyuan Allen-Zhu, Yuanzhi Li, and Yingyu Liang.
\newblock Learning and generalization in overparameterized neural networks,
  going beyond two layers.
\newblock In \emph{Advances in Neural Information Processing Systems}, pp.\
  6155--6166, 2019{\natexlab{a}}.

\bibitem[Allen-Zhu et~al.(2019{\natexlab{b}})Allen-Zhu, Li, and
  Song]{allen2019convergenceDNN}
Zeyuan Allen-Zhu, Yuanzhi Li, and Zhao Song.
\newblock A convergence theory for deep learning via over-parameterization.
\newblock In \emph{International Conference on Machine Learning}, pp.\
  242--252, 2019{\natexlab{b}}.

\bibitem[Arora et~al.(2019)Arora, Du, Hu, Li, Salakhutdinov, and
  Wang]{arora2019exact}
Sanjeev Arora, Simon~S Du, Wei Hu, Zhiyuan Li, Russ~R Salakhutdinov, and
  Ruosong Wang.
\newblock On exact computation with an infinitely wide neural net.
\newblock In \emph{Advances in Neural Information Processing Systems},
  volume~32, 2019.

\bibitem[Bahar et~al.(2017)Bahar, Alkhouli, Peter, Brix, and
  Ney]{bahar2017empirical}
Parnia Bahar, Tamer Alkhouli, Jan-Thorsten Peter, Christopher Jan-Steffen Brix,
  and Hermann Ney.
\newblock Empirical investigation of optimization algorithms in neural machine
  translation.
\newblock \emph{The Prague Bulletin of Mathematical Linguistics}, 108\penalty0
  (1):\penalty0 13--25, 2017.

\bibitem[Bernstein et~al.(2018)Bernstein, Wang, Azizzadenesheli, and
  Anandkumar]{bernstein2018signsgd}
Jeremy Bernstein, Yu-Xiang Wang, Kamyar Azizzadenesheli, and Animashree
  Anandkumar.
\newblock {signSGD: Compressed optimisation for non-convex problems}.
\newblock In \emph{International Conference on Machine Learning}, pp.\
  560--569. PMLR, 2018.
\newblock URL \url{https://proceedings.mlr.press/v80/bernstein18a.html}.

\bibitem[Brock et~al.(2021)Brock, De, Smith, and
  Simonyan]{HP-imageRecognitionClipping}
Andy Brock, Soham De, Samuel~L Smith, and Karen Simonyan.
\newblock High-performance large-scale image recognition without normalization.
\newblock In \emph{International Conference on Machine Learning}, pp.\
  1059--1071. PMLR, 2021.
\newblock URL \url{https://proceedings.mlr.press/v139/brock21a.html}.

\bibitem[Brown et~al.(2020)Brown, Mann, Ryder, Subbiah, Kaplan, Dhariwal,
  Neelakantan, Shyam, Sastry, Askell, Agarwal, Herbert-Voss, Krueger, Henighan,
  Child, Ramesh, Ziegler, Wu, Winter, Hesse, Chen, Sigler, Litwin, Gray, Chess,
  Clark, Berner, McCandlish, Radford, Sutskever, and
  Amodei]{languageModelsFewShotLearnersGPT3}
Tom~B. Brown, Benjamin Mann, Nick Ryder, Melanie Subbiah, Jared Kaplan,
  Prafulla Dhariwal, Arvind Neelakantan, Pranav Shyam, Girish Sastry, Amanda
  Askell, Sandhini Agarwal, Ariel Herbert-Voss, Gretchen Krueger, Tom Henighan,
  Rewon Child, Aditya Ramesh, Daniel~M. Ziegler, Jeffrey Wu, Clemens Winter,
  Christopher Hesse, Mark Chen, Eric Sigler, Mateusz Litwin, Scott Gray,
  Benjamin Chess, Jack Clark, Christopher Berner, Sam McCandlish, Alec Radford,
  Ilya Sutskever, and Dario Amodei.
\newblock Language models are few-shot learners.
\newblock In \emph{Advances in Neural Information Processing Systems}, 2020.

\bibitem[Chen et~al.(2020{\natexlab{a}})Chen, Zhou, Tang, Yang, Cao, and
  Gu]{chen2018closing}
Jinghui Chen, Dongruo Zhou, Yiqi Tang, Ziyan Yang, Yuan Cao, and Quanquan Gu.
\newblock Closing the generalization gap of adaptive gradient methods in
  training deep neural networks.
\newblock In \emph{29th International Joint Conference on Artificial
  Intelligence, IJCAI 2020}, pp.\  3267--3275, 2020{\natexlab{a}}.

\bibitem[Chen et~al.(2020{\natexlab{b}})Chen, Bai, Lee, Zhao, Wang, Xiong, and
  Socher]{chen2020towards}
Minshuo Chen, Yu~Bai, Jason~D Lee, Tuo Zhao, Huan Wang, Caiming Xiong, and
  Richard Socher.
\newblock Towards understanding hierarchical learning: Benefits of neural
  representations.
\newblock In \emph{Advances in Neural Information Processing Systems},
  volume~33, pp.\  22134--22145, 2020{\natexlab{b}}.

\bibitem[Chen et~al.(2019)Chen, Liu, Sun, and Hong]{chen2018convergence}
Xiangyi Chen, Sijia Liu, Ruoyu Sun, and Mingyi Hong.
\newblock {On the convergence of a class of Adam-type algorithms for non-convex
  optimization}.
\newblock In \emph{International Conference on Learning Representations}, 2019.
\newblock URL \url{https://openreview.net/forum?id=H1x-x309tm}.
\newblock arXiv preprint arXiv:1808.02941.

\bibitem[Chizat et~al.(2019)Chizat, Oyallon, and Bach]{chizat2019lazy}
Lenaic Chizat, Edouard Oyallon, and Francis Bach.
\newblock On lazy training in differentiable programming.
\newblock In \emph{Advances in Neural Information Processing Systems},
  volume~32, 2019.
\newblock URL \url{https://arxiv.org/abs/1812.07956}.

\bibitem[De et~al.(2018)De, Mukherjee, and Ullah]{deMukherjeeAdamConvergence}
Soham De, Anirbit Mukherjee, and Enayat Ullah.
\newblock {Convergence guarantees for RMSProp and ADAM in non-convex
  optimization and an empirical comparison to Nesterov acceleration}.
\newblock arXiv preprint arXiv:1807.06766, 2018.
\newblock URL \url{https://arxiv.org/abs/1807.06766}.

\bibitem[Devlin et~al.(2019)Devlin, Chang, Lee, and Toutanova]{bert}
Jacob Devlin, Ming-Wei Chang, Kenton Lee, and Kristina~N. Toutanova.
\newblock {BERT: Pre-training of Deep Bidirectional Transformers for Language
  Understanding}.
\newblock In \emph{Proceedings of NAACL-HLT 2019}, 2019.
\newblock URL \url{https://arxiv.org/abs/1810.04805}.

\bibitem[Dosovitskiy et~al.(2020)Dosovitskiy, Beyer, Kolesnikov, Weissenborn,
  Zhai, Unterthiner, Dehghani, Minderer, Heigold, Gelly,
  et~al.]{dosovitskiy2020image}
Alexey Dosovitskiy, Lucas Beyer, Alexander Kolesnikov, Dirk Weissenborn,
  Xiaohua Zhai, Thomas Unterthiner, Mostafa Dehghani, Matthias Minderer,
  G~Heigold, S~Gelly, et~al.
\newblock {An Image is Worth 16x16 Words: Transformers for Image Recognition at
  Scale}.
\newblock In \emph{International Conference on Learning Representations}, 2020.
\newblock URL \url{https://arxiv.org/abs/2010.11929}.

\bibitem[Dou \& Li(2021)Dou and Li]{OneSidedConvergenceGANwithAdamDou}
Zehao Dou and Yuanzhi Li.
\newblock {On the One-sided Convergence of Adam-type Algorithms in Non-convex
  Non-concave Min-max Optimization}.
\newblock arXiv preprint arXiv:2109.14213, 2021.
\newblock URL \url{https://openreview.net/forum?id=NK5hHymegzo}.

\bibitem[Du et~al.(2018)Du, Zhai, Poczos, and Singh]{du2018gradient}
Simon~S Du, Xiyu Zhai, Barnabas Poczos, and Aarti Singh.
\newblock Gradient descent provably optimizes over-parameterized neural
  networks.
\newblock In \emph{International Conference on Learning Representations}, 2018.
\newblock URL \url{https://openreview.net/forum?id=S1eK3i09YQ}.

\bibitem[Duchi et~al.(2011)Duchi, Hazan, and Singer]{duchi2011adaptive}
John Duchi, Elad Hazan, and Yoram Singer.
\newblock Adaptive subgradient methods for online learning and stochastic
  optimization.
\newblock \emph{Journal of Machine Learning Research}, 12\penalty0
  (Jul):\penalty0 2121--2159, 2011.

\bibitem[Fang et~al.(1997)Fang, Gong, and Qian]{fang1997annealing}
Haitao Fang, Guanglu Gong, and Minping Qian.
\newblock Annealing of iterative stochastic schemes.
\newblock \emph{SIAM journal on control and optimization}, 35\penalty0
  (6):\penalty0 1886--1907, 1997.

\bibitem[He et~al.(2016)He, Zhang, Ren, and Sun]{deepResidualLearningResNet}
Kaiming He, Xiangyu Zhang, Shaoqing Ren, and Jian Sun.
\newblock Deep residual learning for image recognition.
\newblock In \emph{Proceedings of the IEEE conference on Computer Vision and
  Pattern Recognition}, pp.\  770--778, 2016.
\newblock URL \url{https://arxiv.org/abs/1512.03385}.

\bibitem[Jacot et~al.(2018)Jacot, Gabriel, and Hongler]{jacot2018_NTK}
Arthur Jacot, Franck Gabriel, and Cl{\'{e}}ment Hongler.
\newblock {Neural Tangent Kernel: Convergence and Generalization in Neural
  Networks}.
\newblock In \emph{Advances in Neural Information Processing Systems},
  volume~31, 2018.
\newblock URL \url{http://arxiv.org/abs/1806.07572}.

\bibitem[Kingma \& Ba(2014)Kingma and Ba]{kingma2014adam}
Diederik~P Kingma and Jimmy Ba.
\newblock {Adam: A Method for Stochastic Optimization}.
\newblock arXiv preprint arXiv:1412.6980, 2014.
\newblock URL \url{https://arxiv.org/abs/1412.6980}.

\bibitem[Krizhevsky(2009)]{learningTinyImagesCIFAR}
Alex Krizhevsky.
\newblock Learning multiple layers of features from tiny images.
\newblock Masters thesis, University of Toronto, Toronto, Canada, 2009.

\bibitem[Liu et~al.(2022)Liu, Zhu, and
  Belkin]{pl-star-overparametrised-deep-learning}
Chaoyue Liu, Libin Zhu, and Mikhail Belkin.
\newblock Loss landscapes and optimization in over-parameterized non-linear
  systems and neural networks.
\newblock \emph{Applied and Computational Harmonic Analysis}, 59:\penalty0
  85--116, 2022.

\bibitem[Liu et~al.(2024)Liu, Li, Hall, Liang, and Ma]{liu2023sophia}
Hong Liu, Zhiyuan Li, David Leo~Wright Hall, Percy Liang, and Tengyu Ma.
\newblock Sophia: A scalable stochastic second-order optimizer for language
  model pre-training.
\newblock In \emph{International Conference on Learning Representations}, 2024.
\newblock URL \url{https://openreview.net/forum?id=3xHDeA8Noi}.

\bibitem[Ma et~al.(2023)Ma, Kong, and Huang]{DP-imageClassification-GClip}
Chunmei Ma, Xiangshan Kong, and Baogui Huang.
\newblock {Image Classification Based on Layered Gradient Clipping Under
  Differential Privacy}.
\newblock \emph{IEEE Access}, 11:\penalty0 20150--20158, 2023.
\newblock \doi{10.1109/ACCESS.2023.3249575}.

\bibitem[Maryak \& Chin(2001)Maryak and Chin]{maryak2001global}
John~L Maryak and Daniel~C Chin.
\newblock Global random optimization by simultaneous perturbation stochastic
  approximation.
\newblock In \emph{Proceedings of the 2001 American Control Conference.(Cat.
  No. 01CH37148)}, volume~2, pp.\  756--762. IEEE, 2001.
\newblock \doi{10.1109/ACC.2001.945806}.

\bibitem[Melis et~al.(2018)Melis, Dyer, and Blunsom]{melis2017state}
G{\'a}bor Melis, Chris Dyer, and Phil Blunsom.
\newblock On the state of the art of evaluation in neural language models.
\newblock In \emph{International Conference on Learning Representations}, 2018.
\newblock URL \url{https://arxiv.org/abs/1707.05589}.

\bibitem[Merity et~al.(2018)Merity, Keskar, and
  Socher]{RegularisingOptimisingLSTMwithGClip}
Stephen Merity, Nitish~Shirish Keskar, and Richard Socher.
\newblock {Regularizing and Optimizing LSTM Language Models}.
\newblock In \emph{International Conference on Learning Representations}, 2018.
\newblock URL \url{http://arxiv.org/abs/1708.02182}.

\bibitem[Pascanu et~al.(2012)Pascanu, Mikolov, and
  Bengio]{UnderstandingExplodingGradient-gclip}
Razvan Pascanu, Tom{\'{a}}s Mikolov, and Yoshua Bengio.
\newblock Understanding the exploding gradient problem.
\newblock arXiv preprint arXiv:1211.5063, 2012.
\newblock URL \url{http://arxiv.org/abs/1211.5063}.

\bibitem[Pascanu et~al.(2013)Pascanu, Mikolov, and
  Bengio]{pascanu2013difficulty}
Razvan Pascanu, Tomas Mikolov, and Yoshua Bengio.
\newblock On the difficulty of training recurrent neural networks.
\newblock In \emph{International Conference on Machine Learning}, pp.\
  1310--1318. PMLR, 2013.
\newblock URL \url{https://proceedings.mlr.press/v28/pascanu13.html}.

\bibitem[Reddi et~al.(2018)Reddi, Kale, and Kumar]{reddi2018convergence}
Sashank~J Reddi, Satyen Kale, and Sanjiv Kumar.
\newblock {On the convergence of Adam and beyond}.
\newblock In \emph{International Conference on Learning Representations}, 2018.
\newblock URL \url{https://openreview.net/forum?id=ryQu7f-RZ}.

\bibitem[Shamir \& Zhang(2013)Shamir and Zhang]{shamirSGDnonSmooth}
Ohad Shamir and Tong Zhang.
\newblock {Stochastic Gradient Descent for Non-smooth Optimization: Convergence
  Results and Optimal Averaging Schemes}.
\newblock In \emph{Proceedings of the 30th International Conference on Machine
  Learning}, pp.\  71--79. PMLR, 2013.
\newblock URL \url{https://proceedings.mlr.press/v28/shamir13.html}.

\bibitem[Simsekli et~al.(2019)Simsekli, Sagun, and
  Gurbuzbalaban]{simsekli2019tail}
Umut Simsekli, Levent Sagun, and Mert Gurbuzbalaban.
\newblock A tail-index analysis of stochastic gradient noise in deep neural
  networks.
\newblock In \emph{Proceedings of the 36th International Conference on Machine
  Learning}. PMLR, 2019.
\newblock URL \url{https://proceedings.mlr.press/v97/simsekli19a.html}.

\bibitem[Staib et~al.(2019)Staib, Reddi, Kale, Kumar, and
  Sra]{staib2019escaping}
Matthew Staib, Sashank~J Reddi, Satyen Kale, Sanjiv Kumar, and Suvrit Sra.
\newblock Escaping saddle points with adaptive gradient methods.
\newblock In \emph{International Conference on Machine Learning}, pp.\
  5956--5965. PMLR, 2019.
\newblock URL \url{https://proceedings.mlr.press/v97/staib19a.html}.

\bibitem[Sun \& Spall(2019)Sun and Spall]{SpallSun2019}
Shiqing Sun and James~C Spall.
\newblock {SPSA Method Using Diagonalized Hessian Estimate}.
\newblock \emph{Proceedings of the IEEE Conference on Decision and Control},
  pp.\  4922–4927, 2019.
\newblock \doi{10.1109/CDC40024.2019.9029707}.

\bibitem[Ward et~al.(2019)Ward, Wu, and Bottou]{ward2019icml}
Rachel Ward, Xiaoxia Wu, and Leon Bottou.
\newblock {AdaGrad Stepsizes: Sharp Convergence Over Nonconvex Landscapes}.
\newblock In \emph{Proceedings of the 36th International Conference on Machine
  Learning}, pp.\  6677--6686. PMLR, 2019.
\newblock URL \url{https://proceedings.mlr.press/v97/ward19a.html}.

\bibitem[Wu et~al.(2019)Wu, Du, and
  Ward]{AdaGradConvergenceNeuralNetworkRachel}
Xiaoxia Wu, Simon~S. Du, and Rachel Ward.
\newblock Global convergence of adaptive gradient methods for an
  over-parameterized neural network.
\newblock arXiv preprint arXiv:1902.07111, 2019.
\newblock URL \url{http://arxiv.org/abs/1902.07111}.

\bibitem[Zhang et~al.(2020{\natexlab{a}})Zhang, He, Sra, and
  Jadbabaie]{ZhangGclipTraining}
Jingzhao Zhang, Tianxing He, Suvrit Sra, and Ali Jadbabaie.
\newblock Why gradient clipping accelerates training: A theoretical
  justification for adaptivity.
\newblock In \emph{International Conference on Learning Representations},
  2020{\natexlab{a}}.
\newblock URL \url{https://openreview.net/forum?id=BJgnXpVYwS}.

\bibitem[Zhang et~al.(2020{\natexlab{b}})Zhang, Karimireddy, Veit, Kim, Reddi,
  Kumar, and Sra]{zhangAdaptiveMethodsAttention}
Jingzhao Zhang, Sai~Praneeth Karimireddy, Andreas Veit, Seungyeon Kim,
  Sashank~J. Reddi, Sanjiv Kumar, and Suvrit Sra.
\newblock Why are adaptive methods good for attention models?
\newblock In \emph{Advances in Neural Information Processing Systems},
  volume~33, 2020{\natexlab{b}}.
\newblock URL \url{https://arxiv.org/abs/1912.03194}.

\bibitem[Zou et~al.(2020)Zou, Cao, Zhou, and Gu]{zou2020gradient}
Difan Zou, Yuan Cao, Dongruo Zhou, and Quanquan Gu.
\newblock {Gradient descent optimizes over-parameterized deep ReLU networks}.
\newblock \emph{Machine Learning}, 109:\penalty0 467--492, 2020.
\newblock URL \url{https://doi.org/10.1007/s10994-019-05839-6}.

\bibitem[Zou et~al.(2023)Zou, Cao, Li, and
  Gu]{UnderstandingGeneralisationAdamNeuralNet}
Difan Zou, Yuan Cao, Yuanzhi Li, and Quanquan Gu.
\newblock Understanding the generalization of adam in learning neural networks
  with proper regularization.
\newblock In \emph{International Conference on Learning Representations}, 2023.
\newblock URL \url{https://openreview.net/forum?id=iUYpN14qjTF}.

\bibitem[Zou et~al.(2019)Zou, Shen, Jie, Zhang, and Liu]{zou2019cvpr}
Fangyu Zou, Li~Shen, Zequn Jie, Weizhong Zhang, and Wei Liu.
\newblock {A Sufficient Condition for Convergences of Adam and RMSProp}.
\newblock In \emph{Proceedings of the IEEE/CVF conference on Computer Vision
  and Pattern Recognition (CVPR)}, 2019.
\newblock URL \url{https://arxiv.org/abs/1811.09358}.

\end{thebibliography}
\bibliographystyle{tmlr}

\clearpage  
\appendix

\section{A Proof of Convergence for Stochastic {\delGClip}}
\label{subsec:stochastic-proof}

We start by proving a more general result as follows,

\begin{theorem}\label{thm:delgclip}
Given Assumptions \ref{assumption-stochastic-bound}, \ref{assumption-L-lower-bound} and \ref{assumption-beta-smooth}, and for an arbitrary choice of $\epsilon >0$, consider
\begin{equation}
1 > \delta >  \frac{\left (1 + \left ( \frac{\epsilon}{\theta} \right )^2 \right )}{\left (1+3 \left ( \frac{\epsilon}{\theta} \right )^2 \right )} \nonumber
\end{equation}
and
\begin{equation}
0 < \eta < \frac{\delta \left (1+3 \left ( \frac{\epsilon}{\theta} \right )^2 \right ) - \left (1 + \left ( \frac{\epsilon}{\theta} \right )^2 \right ) }{2\beta (1 + \left ( \frac{\epsilon}{\theta} \right )^2 )}, \nonumber
\end{equation}

stochastic {\delGClip} satisfies the following inequality over any $T >0$ iterations,
\begin{align}
    \nonumber  \min_{t=1,\ldots,T}  \E \left [ \norm{\nabla \gL(\vw_t)}^2 \right ] \le \epsilon^2 + \frac{1}{T} \cdot \frac{\gL(\vw_1)}{\left (  \frac{\eta}{2} (3\delta - 1) - \beta \eta^2  \right ) }.
\end{align}
\end{theorem}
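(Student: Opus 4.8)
The plan is to follow the classical descent-plus-telescoping template for non-convex stochastic optimization, adapted to the complication that the step size $h(\vg_t)$ is itself a random function of the stochastic gradient. First I would invoke $\beta$-smoothness (Assumption~\ref{assumption-beta-smooth}) at consecutive iterates and substitute the update $\vw_{t+1}-\vw_t = -h(\vg_t)\vg_t$ to obtain
\[
\gL(\vw_{t+1}) \le \gL(\vw_t) - h(\vg_t)\langle \nabla\gL(\vw_t),\vg_t\rangle + \tfrac{\beta}{2}\, h(\vg_t)^2\norm{\vg_t}^2 .
\]
The genuine obstacle is the cross term: since $h(\vg_t)$ depends on $\vg_t$, the usual simplification $\E[h(\vg_t)\langle \nabla\gL(\vw_t),\vg_t\rangle\mid\vw_t] = h\norm{\nabla\gL(\vw_t)}^2$ afforded by unbiasedness $\E[\vg_t\mid\vw_t]=\nabla\gL(\vw_t)$ is unavailable, because $h(\vg_t)$ and $\vg_t$ are correlated.

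The device I would use to decouple them is to replace $h(\vg_t)$ by its deterministic envelope $\eta\delta \le h(\vg_t)\le\eta$ from Lemma~\ref{lemma-d-gclip-upper-lower-bound}, applied \emph{before} taking any expectation. Concretely, I would expand the inner product by the polarization identity $\langle \nabla\gL(\vw_t),\vg_t\rangle = \tfrac12\big(\norm{\nabla\gL(\vw_t)}^2 + \norm{\vg_t}^2 - \norm{\vg_t-\nabla\gL(\vw_t)}^2\big)$, bound the quadratic term by $h(\vg_t)^2\le\eta^2$, and then bound each resulting piece by the \emph{sign-appropriate} endpoint of $[\eta\delta,\eta]$: the lower bound $\eta\delta$ on the descent-favourable (negative) contributions $-\tfrac{h}{2}\norm{\nabla\gL(\vw_t)}^2$ and $-\tfrac{h}{2}\norm{\vg_t}^2$, and the upper bound $\eta$ on the unfavourable (positive) noise contribution $\tfrac{h}{2}\norm{\vg_t-\nabla\gL(\vw_t)}^2$. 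Every coefficient is now deterministic, so the conditional expectation passes through cleanly using $\E[\norm{\vg_t}^2\mid\vw_t] = \norm{\nabla\gL(\vw_t)}^2 + \E[\norm{\vg_t-\nabla\gL(\vw_t)}^2\mid\vw_t]$ together with the noise bound $\E[\norm{\vg_t-\nabla\gL(\vw_t)}^2\mid\vw_t]\le\theta^2$ from Assumption~\ref{assumption-stochastic-bound}.

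Collecting terms yields a one-step inequality of the form $\E[\gL(\vw_{t+1})\mid\vw_t] \le \gL(\vw_t) - A\,\norm{\nabla\gL(\vw_t)}^2 + B\,\theta^2$ with the explicit constants $A = \eta\delta - \tfrac{\beta\eta^2}{2}$ and $B = \tfrac{\eta}{2}(1-\delta) + \tfrac{\beta\eta^2}{2}$. Writing $C \coloneqq \tfrac{\eta}{2}(3\delta-1) - \beta\eta^2$ for the constant in the theorem, I would then check two purely algebraic facts: that $A \ge C$, which holds since $A - C = \tfrac{\eta}{2}(1-\delta+\beta\eta)\ge 0$; and that $B\theta^2 \le C\epsilon^2$. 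The second is exactly where the hypotheses on $\delta$ and $\eta$ are consumed: dividing by $\tfrac{\eta}{2}$ and by $\theta^2$ and setting $\epsilon'^2 = \epsilon^2/\theta^2$, the inequality $B\theta^2\le C\epsilon^2$ reduces to $\beta\eta(1+2\epsilon'^2) \le \delta(1+3\epsilon'^2)-(1+\epsilon'^2)$, which the stated upper bound on $\eta$ implies because $1+2\epsilon'^2 \le 2(1+\epsilon'^2)$. Together these give the clean per-step estimate $\E[\gL(\vw_{t+1})\mid\vw_t] \le \gL(\vw_t) - C\big(\norm{\nabla\gL(\vw_t)}^2 - \epsilon^2\big)$.

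Finally I would take total expectations, sum over $t=1,\ldots,T$ so the loss terms telescope to $\gL(\vw_1) - \E[\gL(\vw_{T+1})]$, discard $\E[\gL(\vw_{T+1})]\ge\gL_*\ge0$ via Assumption~\ref{assumption-L-lower-bound}, and divide by $CT$ to get $\tfrac1T\sum_{t=1}^T\E\norm{\nabla\gL(\vw_t)}^2 \le \epsilon^2 + \gL(\vw_1)/(CT)$; bounding the minimum by the average yields the claim. I expect the delicate step to be the sign-aware application of the step-size envelope to the polarized cross term, since this is precisely what renders the correlation between $h(\vg_t)$ and $\vg_t$ harmless; by contrast, the constant-chasing needed for $A\ge C$ and $B\theta^2\le C\epsilon^2$ is routine once the one-step inequality is assembled.
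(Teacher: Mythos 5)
Your proof is correct and reaches the stated bound (in fact with some room to spare), but it handles the one genuinely delicate step by a different route than the paper. Both arguments open with the $\beta$-smoothness descent inequality, both rely on the envelope $\eta\delta \le h(\vg_t) \le \eta$, and both close by telescoping and bounding the minimum by the average; the divergence is in how the correlation between $h(\vg_t)$ and $\vg_t$ inside $\E[h(\vg_t)\langle \nabla\gL(\vw_t),\vg_t\rangle \mid \vw_t]$ is neutralized. The paper substitutes $\vg_t = \nabla\gL(\vw_t) + (\vg_t - \nabla\gL(\vw_t))$ into both the linear and quadratic terms and devotes two auxiliary lemmas (Lemmas~\ref{lem:hsq} and~\ref{lem:h}, the latter via the unbiasedness trick $\E[-h(\vg_t)\langle\cdot,\cdot\rangle] = \E[(\eta-h(\vg_t))\langle\cdot,\cdot\rangle]$) to the resulting cross terms, which produce first-order quantities $\theta\norm{\nabla\gL(\vw_t)}$ that must then be absorbed by $\theta\norm{\nabla\gL(\vw_t)} \le \tfrac12(\theta^2 + \norm{\nabla\gL(\vw_t)}^2)$. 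You instead polarize $\langle\nabla\gL(\vw_t),\vg_t\rangle$ and apply the sign-appropriate endpoint of $[\eta\delta,\eta]$ to each piece \emph{before} taking expectations, so no cross term and no auxiliary lemma is needed. The payoff is quantitative: your per-step constants $A = \eta\delta - \tfrac{\beta\eta^2}{2}$ and $B = \tfrac{\eta}{2}(1-\delta) + \tfrac{\beta\eta^2}{2}$ dominate the paper's $C = \tfrac{\eta}{2}(3\delta-1) - \beta\eta^2$ and $\tfrac{2\beta\eta^2 + \eta(1-\delta)}{2}$ respectively (indeed $A - C = \tfrac{\eta}{2}(1-\delta+\beta\eta) \ge 0$), and your reduction of $B\theta^2 \le C\epsilon^2$ to $\beta\eta(1+2\epsilon'^2) \le \delta(1+3\epsilon'^2) - (1+\epsilon'^2)$, which follows from the hypothesis on $\eta$ since $1+2\epsilon'^2 \le 2(1+\epsilon'^2)$, checks out. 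The only point you should make explicit is that the hypotheses also force $C>0$ (because $\delta > \tfrac{1+\epsilon'^2}{1+3\epsilon'^2} > \tfrac13$ and the stated upper bound on $\eta$ lies below $\tfrac{3\delta-1}{2\beta}$), which is needed both to divide by $CT$ at the end and for the displayed bound to be meaningful; this is buried in your ``routine constant-chasing'' but deserves a line.
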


It's clear from above that we can choose any $\epsilon >0$ howsoever small and $T>0$ howsoever large and have the minimum value over iterates of the expected gradient norm be similarly small.  To prove Theorem~\ref{thm:delgclip} we need the following two lemmas.

\begin{lemma}\label{lem:hsq}
   \begin{equation}
    \E \left [ h(\vg_t)^2 \langle  \vg_t - \nabla \gL(\vw_t) , \nabla \gL(\vw_t) \rangle \mid \vw_t \right ] \le \eta^2 \theta \norm{\nabla \gL(\vw_t)}.
\end{equation}
\end{lemma}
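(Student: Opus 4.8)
The plan is to exploit the unbiasedness of $\vg_t$ together with the uniform step-size bounds from Lemma~\ref{lemma-d-gclip-upper-lower-bound}. The main obstacle is that $h(\vg_t)$ depends on $\vg_t$, so one cannot simply pull the step size outside the conditional expectation and invoke $\E[\vg_t - \nabla\gL(\vw_t) \mid \vw_t] = \vzero$. To decouple the two, I would add and subtract the deterministic upper bound $\eta^2$, writing $h(\vg_t)^2 = \eta^2 + (h(\vg_t)^2 - \eta^2)$, and split the conditional expectation accordingly into two terms.

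First I would dispatch the term carrying the constant factor, namely $\eta^2 \, \E[\langle \vg_t - \nabla\gL(\vw_t), \nabla\gL(\vw_t)\rangle \mid \vw_t]$. Since $\nabla\gL(\vw_t)$ is measurable with respect to $\vw_t$ and the defining property of stochastic $\delta$-GClip gives $\E[\vg_t \mid \vw_t] = \nabla\gL(\vw_t)$, the inner product has zero conditional mean, so this entire term vanishes. This is the step I would write most carefully: one must make explicit that $\nabla\gL(\vw_t)$ is treated as a constant under $\E[\cdot \mid \vw_t]$ while $\vg_t$ carries the only randomness, so that the martingale-difference identity $\E[\vg_t - \nabla\gL(\vw_t) \mid \vw_t] = \vzero$ applies verbatim.

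Next I would bound the residual term $\E[(h(\vg_t)^2 - \eta^2)\langle \vg_t - \nabla\gL(\vw_t), \nabla\gL(\vw_t)\rangle \mid \vw_t]$ by controlling the integrand pointwise before integrating. By Lemma~\ref{lemma-d-gclip-upper-lower-bound} we have $\eta\delta \le h(\vg_t) \le \eta$, hence $0 \le \eta^2 - h(\vg_t)^2 \le \eta^2$, giving $|h(\vg_t)^2 - \eta^2| \le \eta^2$. Applying Cauchy--Schwarz to the inner product and then Assumption~\ref{assumption-stochastic-bound} yields $|\langle \vg_t - \nabla\gL(\vw_t), \nabla\gL(\vw_t)\rangle| \le \norm{\vg_t - \nabla\gL(\vw_t)} \norm{\nabla\gL(\vw_t)} \le \theta \norm{\nabla\gL(\vw_t)}$. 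Multiplying these two estimates bounds the integrand by $\eta^2 \theta \norm{\nabla\gL(\vw_t)}$, which is deterministic given $\vw_t$; taking the conditional expectation therefore preserves the bound and delivers the claimed inequality. Everything beyond the vanishing-term argument is a routine application of the boundedness of the clipped step size and the bounded-noise assumption, so I do not expect any further difficulty.
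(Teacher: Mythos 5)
Your proof is correct. It differs from the paper's in form: the paper bounds the left-hand side in a single chain, applying Cauchy--Schwarz to the inner product, then Assumption~\ref{assumption-stochastic-bound} to get the factor $\theta$, then $h(\vg_t) \le \eta$ to get $\E[h(\vg_t)^2 \mid \vw_t] \le \eta^2$ --- no decomposition and no appeal to unbiasedness. Your add-and-subtract of $\eta^2$ followed by the martingale-difference cancellation is the same device the paper reserves for Lemma~\ref{lem:h}, where it genuinely pays off (there, $0 \le \eta - h(\vg_t) \le \eta - \eta\delta$ tightens the constant). Here it buys nothing as you have written it, because you then bound the residual weight by $\abs{h(\vg_t)^2 - \eta^2} \le \eta^2$, which is exactly the constant the direct argument produces; the extra step is harmless but superfluous. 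Had you instead used $0 \le \eta^2 - h(\vg_t)^2 \le \eta^2(1-\delta^2)$ from Lemma~\ref{lemma-d-gclip-upper-lower-bound}, your route would actually yield the marginally sharper bound $\eta^2(1-\delta^2)\,\theta\,\norm{\nabla \gL(\vw_t)}$, which still implies the stated lemma. Both arguments are sound; the paper's is shorter, yours is slightly more structured and could be sharpened at no extra cost.
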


\begin{proof}
We begin by employing Cauchy-Schwarz and Assumption \ref{assumption-stochastic-bound} to get
\begin{align}
\label{eq:lem:noise}
    \nonumber \E \bigl[ h(\vg_t)^2 \langle  \vg_t -  \nabla \gL(\vw_t) , \nabla \gL(\vw_t) \rangle \mid \vw_t  \bigr] 
    &\le  \E \left [ h(\vg_t)^2 \norm{\vg_t - \nabla \gL(\vw_t)} \mid \vw_t \right ] \norm{\nabla \gL(\vw_t)} \nonumber 
    \le  \E \left [ h(\vg_t)^2 \mid \vw_t \right ] \norm{\nabla \gL(\vw_t)} \theta \nonumber \\ 
    &\le \eta^2 \theta \norm{\nabla \gL(\vw_t)} 
\end{align}
where in the last inequality we invoked the fact that $h(\vg_t) \le \eta$.
\end{proof}

\begin{lemma}\label{lem:h}

\[
\E \left [ (-h(\vg_t)) \langle  \vg_t - \nabla \gL(\vw_t) , \nabla \gL(\vw_t) \rangle \mid \vw_t \right ] 
\le  (\eta - \eta \delta )\cdot  \theta \cdot \norm{\nabla \gL(\vw_t)}
\] 

\end{lemma}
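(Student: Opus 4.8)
The plan is to mimic the structure of the proof of Lemma~\ref{lem:hsq}, but with one extra idea that is needed here because the adaptive step size $h(\vg_t)$ multiplies a term whose conditional expectation vanishes. The naive estimate $|{-}h(\vg_t)\langle\vg_t-\nabla\gL(\vw_t),\nabla\gL(\vw_t)\rangle|\le\eta\,\theta\,\norm{\nabla\gL(\vw_t)}$ is too crude, since it yields a factor $\eta$ rather than the required $(\eta-\eta\delta)$. To recover the sharper constant I would exploit the unbiasedness of the stochastic gradient, $\E[\vg_t\mid\vw_t]=\nabla\gL(\vw_t)$.

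First I would center the step size around its deterministic lower bound by writing $-h(\vg_t)=-\eta\delta-(h(\vg_t)-\eta\delta)$. Distributing this inside the conditional expectation, the constant-coefficient piece becomes $-\eta\delta\,\E[\langle\vg_t-\nabla\gL(\vw_t),\nabla\gL(\vw_t)\rangle\mid\vw_t]$, which is exactly zero: $\nabla\gL(\vw_t)$ is $\vw_t$-measurable and $\E[\vg_t-\nabla\gL(\vw_t)\mid\vw_t]=0$. This leaves only the fluctuation term $-\E[(h(\vg_t)-\eta\delta)\langle\vg_t-\nabla\gL(\vw_t),\nabla\gL(\vw_t)\rangle\mid\vw_t]$.

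Next I would bound this fluctuation term. By Lemma~\ref{lemma-d-gclip-upper-lower-bound} we have $0\le h(\vg_t)-\eta\delta\le\eta-\eta\delta$, so the coefficient is nonnegative and uniformly bounded. Applying Cauchy--Schwarz to the inner product followed by Assumption~\ref{assumption-stochastic-bound} gives $|\langle\vg_t-\nabla\gL(\vw_t),\nabla\gL(\vw_t)\rangle|\le\norm{\vg_t-\nabla\gL(\vw_t)}\,\norm{\nabla\gL(\vw_t)}\le\theta\,\norm{\nabla\gL(\vw_t)}$. Combining, the fluctuation term is at most $\E[\,h(\vg_t)-\eta\delta\mid\vw_t\,]\cdot\theta\,\norm{\nabla\gL(\vw_t)}\le(\eta-\eta\delta)\,\theta\,\norm{\nabla\gL(\vw_t)}$, which is precisely the claimed inequality.

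The one subtle point—and the step most likely to derail a careless argument—is the centering itself: because $h(\vg_t)$ is correlated with $\vg_t$, one cannot pull it out of the expectation, so applying the martingale property directly to $-h(\vg_t)\langle\cdots\rangle$ is invalid. Subtracting the deterministic lower bound $\eta\delta$ is exactly what isolates a mean-zero part (annihilated by unbiasedness) from a bounded nonnegative part (controlled via $h(\vg_t)\le\eta$), and it is this decomposition that produces the tighter factor $(\eta-\eta\delta)$ rather than $\eta$.
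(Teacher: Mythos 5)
Your proof is correct and uses essentially the same idea as the paper: exploit the unbiasedness of $\vg_t$ to recenter the coefficient $-h(\vg_t)$ by a constant (which contributes zero conditional expectation), leaving a fluctuation whose magnitude is at most $\eta-\eta\delta$, and then finish with Cauchy--Schwarz and Assumption~\ref{assumption-stochastic-bound}. The only cosmetic difference is that you recenter at the lower bound $\eta\delta$, whereas the paper writes $-h(\vg_t)=-\eta+(\eta-h(\vg_t))$ and recenters at the upper bound $\eta$; both decompositions leave a coefficient ranging over an interval of width $\eta(1-\delta)$ and yield the identical final bound.
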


\begin{proof}
Because $\vg_t$ is an unbiased gradient estimate we have,
\[
\E [ (- h(\vg_t) )\cdot \langle \vg_t - \nabla \gL(\vw_t) , \nabla \gL(\vw_t) \rangle  \vert \vw_t  ] 
= \E [ (\eta - h(\vg_t) )\cdot \langle \vg_t - \nabla \gL(\vw_t) , \nabla \gL(\vw_t) \rangle  \vert \vw_t  ] 
\]
Noting that $ 0 \leq \eta - h(\vg_t) \leq \eta - \eta \delta$, we get,
\[
\E [ (- h(\vg_t) )\cdot \langle \vg_t - \nabla \gL(\vw_t) , \nabla \gL(\vw_t) \rangle  \vert \vw_t  ]
\leq (\eta - \eta \delta )\cdot  \theta \cdot \norm{\nabla \gL(\vw_t)}
\]
\end{proof}

\subsection{Proof of Theorem~\ref{thm:delgclip}}

\begin{proof}
We parameterize the line from $\vw_{t}$ to $\vw_{t+1}$ as $\kappa (t) = t \vw_t + (1 - t) \vw_{t+1}$ and applying the Taylor's expansion and then Cauchy-Schwarz formula for the loss evaluated at its end-point we get
\begin{align*}
    \E [  \gL( \vw_{t+1})  \mid \vw_t]  
     &\le  \E \biggl [ \gL(\vw_t) - h(\vg_t) \langle \vg_t, \nabla \gL(\vw_t) \rangle \\
    &\hspace{15pt} + \frac{1}{2} \int_{0}^1 (\vw_{t+1} - \vw_t)^\top \nabla^2 \mathcal{L} (\kappa (s)) (\vw_{t+1} - \vw_t) \, ds ~ \big \mid ~ \vw_t \biggr ] \\
    &\le  \gL(\vw_t) -  \E [ h(\vg_t) \langle \vg_t, \nabla \gL(\vw_t) \rangle \mid \vw_t] \\
    &\hspace{15pt} + \frac{\E [\norm{\vw_{t+1} - \vw_t}^2\mid \vw_t]}{2} \int_{0}^1 \norm{\nabla^2 \mathcal{L} (\kappa (s))} \,ds.
\end{align*}

Invoking $\norm{\vw_{t+1} - \vw_t} = h(\vg_t) \norm{\vg_t}$ and $\norm{\nabla^2 \mathcal{L} (\kappa (s))} \le \beta$ we have
\begin{align*}
    \E [ & \gL(\vw_{t+1}) \mid \vw_t ]
    \le \gL(\vw_t) - \E [h(\vg_t) \langle \vg_t, \nabla \gL(\vw_t) \rangle \mid \vw_t] 
    + \frac{\beta}{2} \E \left [ h(\vg_t)^2 \norm{\vg_t}^2 \mid \vw_t \right ]
\end{align*}

Substituting  $\nabla \gL(\vw_t) + \vg_t - \nabla \gL(\vw_t)$ for $\vg_t$ in the second and the third term above, we get,

\begin{align*}
    \E [ \gL(\vw_{t+1}) \mid \vw_t] 
    &\le \gL(\vw_t) - \E [h(\vg_t) \langle \vg_t - \nabla \gL(\vw_t), \nabla \gL(\vw_t) \rangle \mid \vw_t ]
     - \E [ h(\vg_t)  \mid \vw_t ] \norm{\nabla \gL(\vw_t)}^2\\
    &\hspace{15pt} + \frac{\beta}{2} \E \biggl [ h(\vg_t)^2 \bigl ( \norm{\nabla \gL(\vw_t)}^2 + \norm{\vg_t - \nabla \gL(\vw_t)}^2 
     + 2 \langle \nabla \gL(\vw_t), \vg_t - \nabla \gL(\vw_t) \rangle \bigr ) \, \big \mid \, \vw_t \biggr ] 
\end{align*}

Recalling that $\eta\delta \le h(\vg_t) \le \eta$ and given that $\delta \in (0, 1)$,
\begin{align*}
    \E [ \gL(\vw_{t+1}) \mid \vw_t]
    &\le \gL(\vw_t)  - \E [h(\vg_t) \langle \vg_t - \nabla \gL(\vw_t), \nabla \gL(\vw_t) \rangle \mid \vw_t ]  \\
    &\hspace{15pt} - \eta \delta \norm{\nabla \gL(\vw_t)}^2 + \frac{\beta \eta^2}{2} \norm{\nabla \gL(\vw_t)}^2  + \frac{\beta \eta^2 \theta^2}{2}  + \beta \E \left [ h(\vg_t)^2  \langle \vg_t - \nabla \gL(\vw_t), \nabla \gL(\vw_t) \rangle  \mid \vw_t \right ].
\end{align*}

Now we invoke Lemma \ref{lem:h} on the second term above and Lemma \ref{lem:hsq} on the last term of the RHS above and take total expectations to get,

\begin{align*}
    \E [ \gL(\vw_{t+1}) ]
    \le 
    \E [\gL(\vw_t)] + \left \{ \eta (1-\delta) \theta + \beta \eta^2 \theta \right \} \E [ \norm{\nabla \gL(\vw_t)}  ] 
    - \left (  \eta \delta - \frac{\beta \eta^2 }{2} \right ) \E \left [ \norm{\nabla \gL(\vw_t)}^2 \right ] + \frac{\beta \eta^2 \theta^2}{2}.
\end{align*}

Given a $T \in \Z^+$, summing the above over all $t=1,\ldots,T$ and recalling that $\vw_1$ is an arbitrary non-random initialization, we have,

\begin{align*}
    \biggl (  \eta \delta  - \frac{\beta \eta^2 }{2} \biggr ) \sum_{t=1}^T \E \left [ \norm{\nabla \gL(\vw_t)}^2 \right ]
    \le \gL(\vw_1) - \E [ \gL(\vw_{T+1}) ]
    + \left \{ \eta (1-\delta) \theta + \beta \eta^2 \theta \right \} \sum_{t=1}^T \E [ \norm{\nabla \gL(\vw_t)} ] + \frac{\beta \eta^2 \theta^2}{2} T.
\end{align*}

Invoking the inequality, $\theta \cdot \norm{\nabla \gL (\vw_t)} \leq \frac{1}{2} (\theta ^2 + \norm{\nabla \gL (\vw_t)}^2)$, and that $\gL \geq 0$ we get,

\begin{align*}
     \biggl (  \eta \delta - \frac{\beta \eta^2 }{2} \biggr ) \sum_{t=1}^T \E \left [ \norm{\nabla \gL(\vw_t)}^2 \right ] 
     &\le \gL(\vw_1) 
    + \left \{ \eta (1-\delta)  + \beta \eta^2 \right \} \sum_{t=1}^T \E [ \frac{1}{2} \cdot \norm{\nabla \gL(\vw_t)}^2  ] \\
    &\hspace{15pt} + \left ( \frac{\beta \eta^2 + \eta (1-\delta)  + \beta \eta^2}{2} \right ) \theta^2 T.
\end{align*}

The above implies,
\begin{align*}
\left (  \eta \delta - \frac{\beta \eta^2 }{2} - \frac{ \eta (1-\delta)  + \beta \eta^2 }{2} \right ) \sum_{t=1}^T \E \left [ \norm{\nabla \gL(\vw_t)}^2 \right ]
\le \gL(\vw_1) + \left ( \frac{2\beta \eta^2 + \eta (1-\delta) }{2} \right ) \theta^2 T.
\end{align*}

Invoking the assumption that $\delta > \frac{\left (1 + \left ( \frac{\epsilon}{\theta} \right )^2 \right )}{\left (1+3 \left ( \frac{\epsilon}{\theta} \right )^2 \right )} > \frac{1}{3}$ and $\eta < \frac{\delta \left (1+3 \left ( \frac{\epsilon}{\theta} \right )^2 \right ) - \left (1 + \left ( \frac{\epsilon}{\theta} \right )^2 \right ) }{2\beta (1 + \left ( \frac{\epsilon}{\theta} \right )^2 )} <\frac{3 \delta -1}{2\beta}$ we get,

\begin{multline*}
    \min_{t=1,\ldots,T}  \E \left [ \norm{\nabla \gL(\vw_t)}^2 \right ] \leq \frac{1}{T} \sum_{t=1}^T \E \left [ \norm{\nabla \gL(\vw_t)}^2 \right ]  \le \frac{\gL(\vw_1)}{T \cdot \left (  \frac{\eta}{2} (3\delta - 1) - \beta \eta^2  \right ) } + \left ( \frac{2\beta \eta^2 + \eta (1-\delta) }{2 \cdot \left (  \frac{\eta}{2} (3\delta - 1) - \beta \eta^2  \right )} \right ) \theta^2.
\end{multline*}

Now for an arbitrary $\epsilon >0$, we can solve the inequality
\[  
\frac{\eta(1-\delta) + 2 \beta \eta^2}{\eta (3\delta -1) - 2\beta \eta^2} < \left ( \frac{\epsilon}{\theta} \right )^2 
\]
and thus we get,
\[
\eta \in \left ( 0 , \frac{\delta \left (1+3 \left ( \frac{\epsilon}{\theta} \right )^2 \right ) - \left (1 + \left ( \frac{\epsilon}{\theta} \right )^2 \right ) }{2\beta (1 + \left ( \frac{\epsilon}{\theta} \right )^2 )} \right ).
\]

Note that the above upperbound on $\eta$ is the range of $\eta$ chosen in the statement. Thus we get the desired theorem statement.  
\end{proof}

\subsection{Proof of Theorem~\ref{cor:stochdel}}\label{proof:stochdel}

\begin{proof}
Substituting the given choices of $\eta,\delta$ and $\beta$ we get
\begin{align*}
\frac{1}{\eta \cdot \left ( \frac{3\delta -1 }{2} - \beta \eta \right )} &=  \frac{16 (1 + \epsilon'^2)^2(1+3\epsilon'^2)}{\epsilon'^2(3 \epsilon'^4 + 9\epsilon'^2 + 4)} \\
&= \frac{4}{\epsilon'^2} + 11 + \frac{\epsilon'^2}{4} + \frac{51 \epsilon'^4}{16} + \gO(\epsilon'^6).
\end{align*}

Substituting the above into the guarantee of Theorem~\ref{thm:delgclip} along with $T =  1 \, / \, \epsilon'^4$ we get the result claimed. 
\end{proof}



\end{document}